\documentclass{article}

\usepackage[utf8]{inputenc} 
\usepackage[T1]{fontenc}    
\usepackage{hyperref}       
\usepackage{url}            
\usepackage{booktabs}       
\usepackage{amsfonts}       
\usepackage{nicefrac}       
\usepackage{microtype}      
\usepackage{xcolor}         
\usepackage{graphicx}
\usepackage{subcaption}
\usepackage{threeparttable}
\usepackage{tabularx}
\usepackage{booktabs}
\usepackage{amsmath}
\usepackage{amssymb}
\usepackage{mathtools}
\usepackage{amsthm}
\usepackage{algorithm}
\usepackage{algpseudocode}
\usepackage[table]{xcolor}
\usepackage{wrapfig}
\usepackage{multirow}

\theoremstyle{plain}
\newtheorem{theorem}{Theorem}[section]

\theoremstyle{definition}

\theoremstyle{remark}
\newtheorem{remark}[theorem]{Remark}

\title{ProbRes: Volatility Learning for Probabilistic Time-Series Forecasting}

\author{Tingting Wang$^\dagger$, Yunyi Zhang$^\ddagger$, Benyou Wang$^\star$\\
School of Data Science, The Chinese University of Hong Kong, Shenzhen\\
$\dagger:$  wangtingting@cuhk.edu.cn\\
$\ddagger:$ zhangyunyi@cuhk.edu.cn\\
$\star:$ wangbenyou@cuhk.edu.cn
}

\date{}

\begin{document}

\maketitle

\begin{abstract}
  Probabilistic time series forecasting has attracted increasing attention in financial applications due to the need to quantify risk and uncertainty in future observations. We propose ProbRes, a post-hoc probabilistic calibration method that explicitly learns and incorporates volatility dynamics into probabilistic forecasting, enabling effective handling of heteroskedastic data. During training, ProbRes employs two architecture-agnostic modules to separately model the conditional mean and conditional volatility. At the inference stage, it generates predictive distributions by resampling normalized residuals. ProbRes is applicable to both univariate and multivariate time series and remains robust under a wide range of error distributions, including non-Gaussian innovations with conditional heteroskedasticity. Theoretical results demonstrate ProbRes's validity and  experiments on both synthetic and real-world datasets show that ProbRes accurately captures predictive distributions and produces well-calibrated prediction intervals.
\end{abstract}

\section{Introduction}
\label{introduction}
Time series data arise naturally in financial applications, including asset pricing, risk management, and economic indicator analysis. 
In these domains, probabilistic time series forecasting \cite{annurev}, which is the task of predicting the probability distribution of future observations, plays a crucial role in risk assessment and decision-making, as discussed in \cite{Luo_Zhang_Xu_Wang_2018, Nguyen_Quanz_2021, https://doi.org/10.1111/jtsa.12739, pacchiardi2024probabilistic, zheng2025koonpro}. Compared to point forecasts, probabilistic forecasts quantify uncertainty and capture variability in the data, providing richer information for downstream tasks such as portfolio optimization, trading strategy decisions, or extreme event prediction.

Real-world financial data exhibit strong time-varying volatility. For example,   asset returns commonly display volatility clustering (volatility
becomes high for certain time periods and low for other periods) and their conditional mean may depend on volatility, reflecting a mean–volatility interaction \cite{tsay2005analysis}. These features motivate explicitly modeling conditional volatility in probabilistic forecasting. To this end, we propose ProbRes, a post-hoc probabilistic calibration method that decomposes observed time series into three components: the conditional mean, the conditional volatility (i.e. conditional standard deviation), and the distribution of normalized residuals. These components respectively characterize the expected value, variability around the mean, and higher-order distributional characteristics (e.g., skewness and tail behavior) of future observations. Building on this decomposition, ProbRes enables practitioners to effectively model these three components and leverage them for probabilistic forecasting.

Compared to existing probabilistic forecasting methods---including bootstrap methods \cite{PAN20161, https://doi.org/10.1111/jtsa.12739}, quantile-based wrappers \cite{pmlr-v151-park22a, yu2026nonparametric}, and end-to-end models \cite{NEURIPS2023_5a1a10c2, ansari2024chronos, zheng2025koonpro}---ProbRes explicitly models the functional dependence of conditional volatility on past observations, in the spirit of classical volatility models \cite{6ab571e5-c8f0-3fcd-9005-ed6f2adc76d7, BOLLERSLEV1986307}, rather than treating it as a nuisance component \cite{FAN2019261}. This makes ProbRes particularly suitable for capturing volatility dynamics and reproducing volatility clustering.

Modeling conditional volatility is of direct practical importance and affects probabilistic forecasting performance. In asset pricing and risk management, volatility is a key indicator of downside risk and is directly related to widely used measures such as value-at-risk, as well as the pricing of derivatives and hedging strategies \cite{abken1996options, tsay2005analysis, DESILVA2017365}. Table \ref{table.variance_of_ARCH_model} illustrates the second point using an ARCH(1) process: although the unconditional variance of $X_t$ is 2, its conditional variance depends on $X_{t-1}.$ Since forecasting relies on the conditional distribution given past data, ignoring conditional heteroskedasticity---e.g., using a constant variance of 2 instead of $1 + 0.5 x^2$ --—can lead to loss in  predictive performance.


\begin{table}[t]
\centering
\caption{Unconditional and conditional variances of the ARCH(1) model 
$X_t = \epsilon_t \sqrt{1 + 0.5X_{t-1}^2}$, where $\epsilon_t \sim i.i.d.\  \mathcal{N}(0,1)$.}
\label{table.variance_of_ARCH_model}
\begin{tabular}{cc}
\toprule
Variance & Conditional variance at $X_{t-1}=x$ \\
\midrule
2 & $1 + 0.5x^2$ \\
\bottomrule
\end{tabular}
\end{table}

The architecture of ProbRes follows a three-stage procedure. In the first stage, we train a mean (point) forecasting model---such as those in \cite{Zeng_Chen_Zhang_Xu_2023, lin2024cyclenet}---to predict the conditional mean of future observations. We then compute the fitted residuals as the difference between the observed values and the predicted means. In the second stage, we take the logarithm of the squared fitted residuals and train a mean forecasting model on these transformed residuals, which capture conditional volatility information. Finally, we normalize the original residuals by dividing them by the predicted volatilities and apply bootstrap algorithms---such as those in \cite{MR515681}---to estimate the distribution of future observations. As demonstrated in \cite{10.1214/aos/1176350142, 8487c678-a212-3cd7-a6ca-cb4eefa4828e, NIPS2014_4e382cb4, PAN20161} and further established by Theorem \ref{theorem.consistent} below, a well-designed bootstrap algorithm can approximate the underlying probability distribution of future time series without imposing restrictive distributional assumptions. Overall, this architecture and its theoretical properties enable ProbRes to generalize across diverse time series settings and adapt to a wide range of mean forecasting models with little additional training.

In addition to capturing conditional volatility information and addressing the conditional heteroskedastic issue, ProbRes offers some practical advantages. First, it provides flexibility in the choice of conditional mean and volatility models. As shown in Section \ref{section.further_discussion}, by applying a logarithmic transformation to the squared residuals, ProbRes requires only two mean-forecasting models to perform probabilistic forecasting. This allows models originally designed for mean forecasting to be adapted for probabilistic forecasting. Second, as established in Theorem \ref{theorem.consistent}, ProbRes incorporates spatial dependence by resampling residual vectors, making it adaptable to multivariate time series settings.

We summarize the advantages of the proposed method as follows.
\begin{itemize}

    \item {\bf Direct modeling of volatility dynamics and robustness to multivariate time series:} ProbRes learns the functional relationship between conditional volatility and past errors, enabling it to capture phenomena such as volatility clustering.
    Moreover, its residual resampling mechanism allows it to capture spatial dependencies.

    \item {\bf Good  interpretability and theoretical justification: } 
    The decomposition allows practitioners to identify and interpret the contributions of different components to the probabilistic forecasts, enhancing its practical utility. Importantly, the validity of ProbRes stems from its ability to emulate the underlying data-generating process of the time series, rather than relying on a black-box model. Furthermore, under certain conditions, the residual resampling mechanism is guaranteed to recover the true distribution of the innovations.

   \item {\bf Few additional training costs and flexible mean/volatility modeling:}
As a post-hoc probabilistic calibration method, ProbRes does not require modifying the underlying model architecture or loss function (e.g., adopting pinball loss) for mean forecasting. Practitioners can directly plug in existing mean-forecasting models to obtain probabilistic forecasts. Empirical results further show that a lightweight multilayer perceptron suffices to capture volatility dynamics, introducing negligible additional training overhead.

\end{itemize}

\section{Related work and motivations}

This work is closely related to two areas of literature: post-hoc probabilistic calibration and wrapper methods, including bootstrap, conformal prediction, and quantile-based forecasting; and end-to-end probabilistic time series forecasting approaches, such as diffusion-based methods. We also introduce the motivation for explicitly modeling volatility dynamics.

\textbf{Post-hoc probabilistic calibration.}    Bootstrap algorithms and conformal prediction methods such as those in \cite{NIPS2010_13f3cf8c, NEURIPS2021_58b7483b, pmlr-v235-wang24ah, yu2024metamath, romano2019conformalized, 10.5555/3618408.3620021, angelopoulos2023conformal}, provide post-hoc calibration techniques for generating probabilistic forecasts without requiring additional training or modifications to model architectures, aligning well with our setting. We further refer to \cite{NEURIPS2023_aef75887, chen2024conformalized} for recent developments in conformal prediction for time series.

An alternative approach to obtaining probabilistic forecasts---also without modifying model architectures---is to change loss functions, such as adopting the pinball loss in quantile regression \cite{JMLR:v7:takeuchi06a, pmlr-v151-park22a, 10.5555/3692070.3694179, yu2026nonparametric}. However, as noted in \cite{FAN2019261}, classical quantile regression typically treats heteroskedasticity as a nuisance rather than explicitly modeling volatility dynamics. In contrast, our work directly models volatility, providing a good alternative to quantile regression for capturing heteroskedastic behavior.

\textbf{End-to-end probabilistic forecasting.} Diffusion models, including \cite{pmlr-v139-rasul21a, rasul2021multivariate, NEURIPS2022_91a85f3f, NEURIPS2023_5a1a10c2, chen2024recurrent, pmlr-v235-chen24n, 10.5555/3540261.3542161, li2024transformermodulated, zheng2025koonpro, kollovieh2025flow} and their variants \cite{NEURIPS2020_4c5bcfec, song2020score}, are trained to learn a transformation between the data distribution and a tractable reference distribution. Once this transformation is learned, samples drawn from the reference distribution can be mapped back to the data space to generate predictive samples, enabling end-to-end probabilistic forecasting. We further refer to \cite{Gao_Cao_Chen_2025} and \cite{NEURIPS2018_5cf68969, 10.5555/3367243.3367442} for approaches that incorporate sequential and state-space structures into diffusion models, and to \cite{pmlr-v139-rangapuram21a, 10.1109/TKDE.2023.3319672, ansari2024chronos} for other deep learning-based probabilistic forecasting methods. Another line of work addresses probabilistic forecasting by modifying training objectives or learning procedures \cite{NEURIPS2020_2f2b2656, rasul2021multivariate, NEURIPS2021_32b12730, NEURIPS2023_6171c9e6, ansari2024chronos}.

While effective in practice, these approaches typically rely on specialized model architectures and complex end-to-end training pipelines. Moreover, they do not explicitly model the functional relationship between future volatility and past observations.


\textbf{Motivation for studying volatility dynamics. }  
Our method learns the functional relationship between volatility dynamics and past observations and residuals, which is motivated by classical financial time series models such as ARCH and GARCH \cite{engle1982autoregressive, BOLLERSLEV1986307} and their extensions \cite{tsay2005analysis}. Beyond improving forecasting performance, explicitly modeling volatility helps capture financial phenomena such as volatility clustering. The study of conditional volatility is
motivated by its central role in financial applications, including option pricing, trading, and asset pricing \cite{yang2025hyperiv, https://doi.org/10.1111/j.1540-6261.2008.01352.x, https://doi.org/10.1111/jofi.12110}. A growing body of machine learning methods \cite{NEURIPS2020_858e4770, doi:10.1137/21M1443546, wiedemann2025operator} has also been developed for nowcasting implied volatility surfaces. We further refer to \cite{ZHANG2025377, 10.1145/3447548.3467115} for related work on volatility surface prediction.

\section{Resampling assisted probabilistic forecasting (ProbRes)}
\label{section.setting}
Suppose we observe a time series $\mathbf{x}_{1:T},$ where $\mathbf{x}_{t}\in\mathbf{R}^d,$ and we denote the subscript $\ t = 1,\cdots, T$ as the time stamp. The objective of probabilistic forecasting is to forecast the distributions of future observations $\mathbf{x}_{(T+1):(T + J)}$ with $J$ the prediction length. 
This setup has been discussed in the literature such as \cite{SALINAS20201181, kollovieh2025flow}. Our work aims to explicitly learn the conditional mean, volatility, and the normalized residuals' distribution information, and leverage these three aspects of information in forecasting. 
To achieve this goal, we incorporate a resampling step into the forecasting algorithm \ref{alg.pred}. Resampling has been well employed in the literature such as \cite{PAN20161}, \cite{10.1145/3711709}, and \cite{ZHANG2025112611} in forecasting. However, to our knowledge, they did not account for the existence of conditional heteroskedasticity (dependence of future variance on past observations). Our work addresses this issue  by normalizing the fitted residuals with the forecasted conditional volatility prior to resampling.

Our framework consists of two stages: a training stage, where models are trained to learn the conditional mean and volatility,  and the normalized fitted residuals are stored; and a forecasting stage, where the trained models are used to generate probabilistic forecasts.

\subsection{Training stage}

Our work is motivated by a two-stage conditionally heteroskedastic vector autoregressive model of the form

\begin{equation}
\left\{
\begin{aligned}
& \mathbf{x}_t = F(\mathbf{x}_{t-1},\ldots,\mathbf{x}_{t-q}) + \boldsymbol{\zeta}_t, \\
& \boldsymbol{\zeta}_t = G(\boldsymbol{\zeta}_{t-1},\ldots,\boldsymbol{\zeta}_{t-s})\boldsymbol{\eta}_t .
\end{aligned}
\right.
\label{eq.heterogeneous_autoregressive}
\end{equation}

where $G(\boldsymbol{\zeta}_{t-1},\cdots,\boldsymbol{\zeta}_{t-s})$ is a $d\times d$ diagonal matrix with diagonal elements $G_1(\boldsymbol{\zeta}_{t-1},\cdots,\boldsymbol{\zeta}_{t-s}), \cdots$ $G_d(\boldsymbol{\zeta}_{t-1},\cdots,\boldsymbol{\zeta}_{t-s}).$  
The conditional mean function 
$F:\mathbf{R}^{d\times q}\to\mathbf{R}^d$ and the conditional volatility functions $G_i: \mathbf{R}^{d\times s}\to [0,\infty)$ are functions to learn, and $\boldsymbol{\eta}_{t}$ are independent of past observations $\mathbf{x}_{-t}$ and $\boldsymbol{\zeta}_{-t}$,  $\mathbf{E}\left[\boldsymbol{\eta}^{(t)}\right] = 0.$ We further assume that $\boldsymbol{\eta}^{(t)}$ have identical distribution.

The functions $F$ and $G_i, i = 1,\cdots,d$ respectively control the conditional mean and conditional volatility of time series data.  Furthermore, such a model offers a good property that the residual terms $\boldsymbol{\zeta}_t$ do not incur bias to the conditional mean $F,$ which motivates the two-stage training procedure as in Algorithm \ref{alg.train}.  We prove this property in Section \ref{section.theoretical_analysis}.


\begin{algorithm}[H]
\caption{Training a heteroskedastic vector autoregressive model}
\label{alg.train}
\begin{algorithmic}[1]

\Require Time series data $\{\mathbf{x}_t: t = 1,\cdots, T\}$, lag $q$ for the conditional mean model, and lag $s$ for the conditional volatility model.

\State Train the conditional mean model $\widehat{F}$ and derive the fitted residuals
\[
\widehat{\boldsymbol{\zeta}}_t = \mathbf{x}_t - \widehat{F}(\mathbf{x}_{t-q},\cdots,\mathbf{x}_{t-1}),
\]
for $t = q + 1, \cdots, T$.

\State Train the conditional volatility model $\widehat{G}$ with the fitted residuals $\widehat{\boldsymbol{\zeta}}_t$, $t = q+1,\cdots,T$. Then derive the normalized fitted residuals
\begin{equation}
\widehat{\boldsymbol{\eta}}_t = \widehat{G}^{-1}\left(\widehat{\boldsymbol{\zeta}}_{t-s},\cdots,\widehat{\boldsymbol{\zeta}}_{t-1}\right)\widehat{\boldsymbol{\zeta}}_t,
\label{eq.normalize}
\end{equation}
for $t = q+s+1,\cdots, T$.

\end{algorithmic}
\end{algorithm}

\begin{remark}
\label{remark.train_G}
    Practitioners may resort to mean forecasting methods, such as \cite{lin2024cyclenet}, to establish the model $\widehat{F}$ for the conditional mean function $F$ in \eqref{eq.heterogeneous_autoregressive}. Learning $G,$ on the other hand, is not straightforward. After calculating $\widehat{\boldsymbol{\zeta}}_t,$ this manuscript performs the transformation $\widehat{\boldsymbol{\iota}}_t = R(\widehat{\boldsymbol{\zeta}}_t)$   for $t = q + 1,\cdots, T,$ where $R:\mathbf{R}^d\to\mathbf{R}^d$ is a function of the form:
    \begin{equation}
    R(\mathbf{x}) = (\log(\mathbf{x}_1^2), \log(\mathbf{x}_2^2),\cdots, \log(\mathbf{x}_d^2))^\top.
    \label{eq.def_R}
    \end{equation}
    We then use mean forecasting methods (e.g., those in \cite{lin2024cyclenet}) to learn $U_i = \log(G_i).$ We  demonstrate in Section \ref{section.further_discussion} that, although taking logarithmic transformations incurs a constant bias when learning  $\log(G_i),$  the constant bias will be self-eliminated during the normalization step \eqref{eq.normalize} of Algorithm \ref{alg.train} and the sampling step \eqref{eq.sample} of  Algorithm \ref{alg.pred}. Consequently, the bias 
introduced during the training stage does not affect forecasting.
\end{remark}

    The motivation of model \eqref{eq.heterogeneous_autoregressive} originates from the ARMA-GARCH model, like those in \cite{Ling_McAleer_2003}, which adopted linear models for both $F$ and $G.$ The conditional heteroskedasticity considered in this manuscript associates volatility with past observations, and is different from \cite{ye2025nonstationarydiffusionprobabilistictime}, where the volatility was associated with exogenous features.

The flexibility of Algorithm \ref{alg.train} is reflected in its flexible selection of models used to learn $F$ and $G$---mean forecasting algorithms, such as those proposed in \cite{Zeng_Chen_Zhang_Xu_2023, zhang2023crossformer, lin2024cyclenet}, among others---can be employed to fulfill this purpose.

    Compared to classical diffusion models, which directly learn the probability distribution of time series, our framework imposes an autoregressive constraint on the model. We consider that this constraint is important because, as also noted in \cite{Gao_Cao_Chen_2025}, ignoring the sequential structure of time series can lead to a misalignment between diffusion mechanisms and the time series structure, and thereby affect predictive performance.

\subsection{Forecasting  stage}

The intuition behind Algorithm \ref{alg.pred} involves simulating the data generating process in \eqref{eq.heterogeneous_autoregressive}. If $\widehat{F}$ and $\widehat{G}$ closely approximate the true conditional mean $F$ and conditional volatilities $G,$ then Theorem \ref{theorem.consistent} in Section \ref{section.theoretical_analysis} guarantees that the distribution of the simulated normalized residuals $\boldsymbol{\eta}_{1:J}^*$ closely matches the distribution of the true normalized residuals $\boldsymbol{\eta}_{1:J}$. Furthermore, the generation of $\mathbf{x}_{(T + 1):(T+J)}^*$ follows the same autoregressive iteration as in  \eqref{eq.heterogeneous_autoregressive}. Therefore, under the assumption that \eqref{eq.heterogeneous_autoregressive} accurately characterizes the data generating process of $\mathbf{x}_{(T+1):(T+J)},$  since the estimated conditional mean $\widehat{F}$, conditional volatility $\widehat{G}$, the distribution of pseudo-normalized residuals $\boldsymbol{\eta}_{1:J}^*$, and the autoregressive iteration all provide good approximations to that of $\mathbf{x}_{(T+1):(T+J)}$, the distribution of the pseudo-samples $\mathbf{x}_{(T+1):(T+J)}^*$ should be close to that of the actual future observations $\mathbf{x}_{(T+1):(T+J)}$.

\begin{remark}
    Practitioners may resort to Remark \ref{remark.train_G} to learn $G.$ In such case, the value of $\widehat{G}(\widehat{\boldsymbol{\zeta}}_{T + j - s}^*,\cdots, \widehat{\boldsymbol{\zeta}}_{T + j- 1}^*)$ can be derived through applying the learned autoregressive model to $\widehat{\boldsymbol{\iota}}_{T+j-s}^*,\cdots, \widehat{\boldsymbol{\iota}}_{T+j-1}^*,$ where 
    $
    \widehat{\boldsymbol{\iota}}_{k}^* = R\left(\boldsymbol{\zeta}_{k}^{*}\right).
    $
    \label{remark.predict_A}
\end{remark}


\begin{algorithm}[H]
\caption{Inference Stage}
\label{alg.pred}
\begin{algorithmic}[1]
\Require Time series data $\mathbf{x}_{1:T}$, lag $q$ for conditional mean, lag $s$ for conditional volatility, prediction step $J$, resampling times  $B$.
\State Derive the functions $\widehat{F}$ and $\widehat{G}$, as well as the normalized fitted residuals $\widehat{\boldsymbol{\eta}}_t$ as in Algorithm~\ref{alg.train}.

\For{$b = 1$ to $B$}
    \State Sample $\boldsymbol{\eta}_{1:J}^*$ by drawing from $\widehat{\boldsymbol{\eta}}_{(q+s+1):T}$ with replacement.
    \State Generate pseudo-samples $\mathbf{x}_{(T+1):(T+J)}^*$ using:
    \begin{equation}
    \begin{aligned}
        \boldsymbol{\zeta}_{T+j}^* &= \widehat{G}(\widehat{\boldsymbol{\zeta}}_{T+j-s}^*, \cdots, \widehat{\boldsymbol{\zeta}}_{T+j-1}^*) \boldsymbol{\eta}_j^*, \\
        \mathbf{x}_{T+j}^* &= \widehat{F}(\mathbf{x}_{T+j-q}^*, \cdots, \mathbf{x}_{T+j-1}^*) + \boldsymbol{\zeta}_{T+j}^*,
    \end{aligned}
    \label{eq.sample}
    \end{equation}
    where $\mathbf{x}_{T+j-q}^* = \mathbf{x}_{T+j-q}$ if $q\geq j$ and $\boldsymbol{\zeta}_{T+j-s}^* = \widehat{\boldsymbol{\zeta}}_{T+j-s}$ if $s \geq j$.
\EndFor

\State \Return Forecasts $\mathbf{x}_{(T+1):(T+J)}^*$ for $b = 1,\cdots,B$.
\end{algorithmic}
\end{algorithm}

Figure \ref{fig:cdf_edf} illustrates why sampling with replacement from  $\widehat{\boldsymbol{\eta}}_{(q+ s + 1):\ T}$ can recover the underlying distribution of $\boldsymbol{\eta}_{(T+1):(T+J)}.$ According to \cite{MR515681}, sampling with replacement from $\widehat{\boldsymbol{\eta}}_{(q+ s + 1):\ T}$  is equivalent to generating observations from a distribution whose cumulative distribution function (CDF) $\widehat{P}(\cdot)$ is the empirical cumulative distribution function of $\widehat{\boldsymbol{\eta}}_{(q+ s + 1):\ T}$ of the form 
\begin{equation}
\widehat{P}(\mathbf{y}) = \frac{1}{T - q - s}\sum_{t = s + q + 1}^T
\mathbf{1}_\mathrm{\widehat{\boldsymbol{\eta}}_{t}\leq \mathbf{y}}
\label{eq.F_hat}
\end{equation}
where $\mathbf{1}_\mathrm{\widehat{\boldsymbol{\eta}}_{t}\leq \mathbf{y}}$ denotes for $\prod_{i = 1}^d \mathbf{1}_\mathrm{\widehat{\boldsymbol{\eta}}_{t,i}\leq \mathbf{y}_i},$ i.e., element-wise less than or equal to. As shown in Figure \ref{fig:cdf_edf},  the empirical CDF
closely approximates the underlying CDF of the data for moderate sample sizes. Therefore, the resampled $\boldsymbol{\eta}_{1:J}^*$ should be able to capture the distributional characteristics of $\boldsymbol{\eta}_{1:T}$. Note that resampling from $\widehat{P}(\cdot)$ also preserves element-wise dependence when $\boldsymbol{\eta}_t$ is a random vector, making our algorithm applicable to multivariate forecasting as well.


\section{Theoretical justification}
\label{section.theoretical_analysis}
The theoretical justification of ProbRes is divided into two parts. First, we provide illustrations on why Algorithm \ref{alg.train} is capable of learning $F$ and $G.$   After that, we summarize in Theorem \ref{theorem.consistent} that the distribution of the pseudo-normalized residuals $\boldsymbol{\eta}_{1:J}^{*}$ closely approximates that of the true normalized residuals $\boldsymbol{\eta}_{1:T}.$

\subsection{Further discussions on Section \ref{section.setting}}
\label{section.further_discussion}
To illustrate why the two-stage procedure in Algorithm \ref{alg.train} learns $F$ and $G,$ from the tower property of conditional expectation, 
\begin{align*}
\mathbf{E}\left[
\boldsymbol{\zeta}_t\mid \mathbf{x}_{(t-q): (t-1)}\right]
&= \mathbf{E}\left[\mathbf{E}\left[ G(\boldsymbol{\zeta}_{t-1},\cdots,\boldsymbol{\zeta}_{t-s})\boldsymbol{\eta}_t\mid \mathbf{x}_{(t-q): (t-1)},\ \boldsymbol{\zeta}_{(t-s):(t-1)}\right] \mid \mathbf{x}_{(t-q): (t-1)}\right]\\
& = \mathbf{E}\left[\left(G(\boldsymbol{\zeta}_{t-1},\cdots,\boldsymbol{\zeta}_{t-s})\mathbf{E}\boldsymbol{\eta}_t\right)\mid \mathbf{x}_{(t-q): (t-1)}\right] = 0.
\end{align*}
Therefore, when we train $\widehat{F},$ the residuals $\boldsymbol{\zeta}_{1:T}$ do not incur bias to $F,$ making it possible for the estimator $\widehat{F}$ to closely approximate $F.$ On the other hand, define the function $R$ as in \eqref{eq.def_R}, define $\boldsymbol{\gamma}_t = R(\boldsymbol{\zeta}_t),$ then the $i$-th element of $\boldsymbol{\gamma}_t$ is 
\begin{equation}
    \boldsymbol{\gamma}_{t,i} = \log\left(
    G_i^2(\boldsymbol{\zeta}_{t-1},\cdots,\boldsymbol{\zeta}_{t-s})
    \right) + \log\left(\boldsymbol{\eta}_{t,i}^2\right).
    \label{eq.autoregressive_gamma_i}
\end{equation}
Furthermore, by assuming that the functions $G_i^2(\cdot), i = 1,\cdots, d,$ depend on $\boldsymbol{\zeta}_{t-1},\cdots,\boldsymbol{\zeta}_{t-s}$ only through their element-wise squares, and notice that $\boldsymbol{\zeta}_{t,i}^2 = \exp\left(\boldsymbol{\gamma}_{t,i}\right),$ \eqref{eq.autoregressive_gamma_i} implies that 
\begin{equation}
\boldsymbol{\gamma}_t = A(\boldsymbol{\gamma}_{t-1},\cdots, \boldsymbol{\gamma}_{t-s}) + \boldsymbol{\iota}_t,
\label{eq.transform}
\end{equation}
where 
$A:\mathbf{R}^{d\times s}\to \mathbf{R}^d$ is a function such that $A_i(\boldsymbol{\gamma}_{t-1},\cdots, \boldsymbol{\gamma}_{t-s}) = \log\left(
    G_i^2(\boldsymbol{\zeta}_{t-1},\cdots,\boldsymbol{\zeta}_{t-s})
    \right) + \mathbf{E}\left[\log\left(\boldsymbol{\eta}_{t,i}^2\right)\right]$ and $\boldsymbol{\iota}_{t,i} =  \log\left(\boldsymbol{\eta}_{t,i}^2\right) - \mathbf{E}\left[\log\left(\boldsymbol{\eta}_{t,i}^2\right)\right].$
Therefore, the representation \eqref{eq.transform} allows the use of a mean-forecasting algorithm to learn $A,$ but inevitably incurs a constant bias term $\mathbf{E}\left[\log\left(\boldsymbol{\eta}_{t,i}^2\right)\right].$

\begin{figure}
    \centering
    \includegraphics[width=0.4\textwidth]{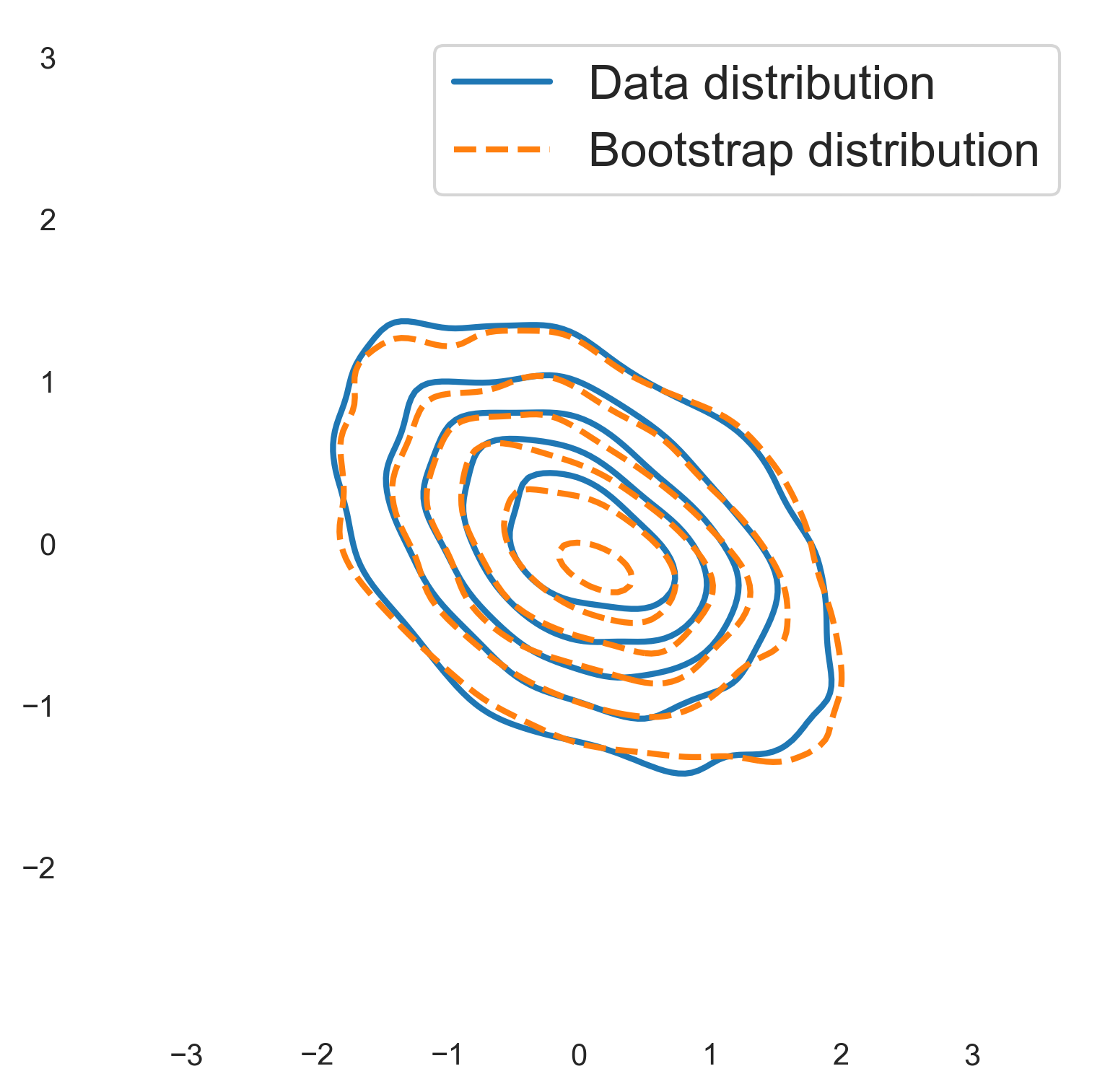}
    \caption{Kernel density estimates of the data (blue) and bootstrapped samples (orange dashed). The data are generated from a bivariate normal distribution; the bootstrap samples are obtained via Line 3 of Algorithm \ref{alg.pred}. The close agreement between the two curves indicates that the bootstrap samples effectively recover the underlying data distribution, even in the presence of element-wise dependence.}
\label{fig:cdf_edf}
\end{figure}

Fortunately, the constant bias does not affect the prediction as it is self-eliminated during \eqref{eq.normalize} of Algorithm \ref{alg.train}, which divides the fitted residuals $\widehat{\boldsymbol{\zeta}}_{(q+1):\ T}$ by $\widehat{G},$  and \eqref{eq.sample} of Algorithm \ref{alg.pred}, which multiplies the sampled $\boldsymbol{\eta}_{1:J}^*$ by $\widehat{G}.$

We emphasize that the assumption of $G^2_i$ depending on $\boldsymbol{\zeta}_{(t-s):\ (t-1)}$ through their element-wise squares is common in the literature. For example, the 
ARMA-GARCH models in \cite{Ling_McAleer_2003} leveraged this assumption. The advantage of this transformation is that, by replacing $\boldsymbol{\gamma}_t$ with $\widehat{\boldsymbol{\gamma}}_t = R(\widehat{\boldsymbol{\zeta}}_t),$ $\widehat{\boldsymbol{\gamma}}_t$ approximately follows an additive autoregressive process \eqref{eq.transform}, allowing the use of various conditional mean forecasting methods---such as those in \cite{lin2024cyclenet}---for estimating the function $A$ in \eqref{eq.transform}.

\subsection{Validity of the resampling procedure}

As illustrated in Section \ref{section.setting}, the validity of Algorithm \ref{alg.pred} 
comes from simulating the underlying data generating process of $\mathbf{x}_{1:(T+J)}$. Therefore,
if model \eqref{eq.heterogeneous_autoregressive} holds true and Algorithm \ref{alg.train} generates good estimators for $F$  and $G$ (up to a constant scale), the validity of  Algorithm \ref{alg.pred} is achieved provided that the empirical process of the vector  $\widehat{\boldsymbol{\eta}}_{(s+q+1):T}$---characterized by the probability measure defined by the joint CDF
in \eqref{eq.F_hat}---converges to the distributions of $\boldsymbol{\eta}_{1:\ (T+J)}.$  Theorem~\ref{theorem.consistent} provides a theoretical justification for this claim, with the proof postponed to Section~\ref{section.proof_theorem} in  Appendix.

\begin{theorem}
\label{theorem.consistent}
Suppose Assumptions 1 - 3 detailed in Section \ref{section.proof_theorem} in Appendix hold true. Then we have 
\begin{equation}
    \sup_{\boldsymbol{y}\in\mathbf{R}^d}\vert\widehat{P}(\boldsymbol{y}) - P(\boldsymbol{y})\vert\to_p 0,
    \label{eq.delta_F}
\end{equation}
where $\to_p$ denotes convergence in probability, $P(\cdot)$ denotes the CDF of $\boldsymbol{\eta}_{t},$ and the convergence is with respect to the sample size $T\to\infty$.
\end{theorem}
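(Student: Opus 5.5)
We plan to split $\widehat{P}-P$ into an oracle term built from the true innovations and a perturbation term coming from estimating $F$ and $G$. Let
\[
\widetilde{P}(\boldsymbol{y}) = \frac{1}{T-q-s}\sum_{t=s+q+1}^T \mathbf{1}_{\boldsymbol{\eta}_t\leq \boldsymbol{y}}
\]
be the empirical CDF of the unobservable true normalized residuals. Then
\[
\sup_{\boldsymbol{y}}\vert\widehat{P}(\boldsymbol{y})-P(\boldsymbol{y})\vert \le \sup_{\boldsymbol{y}}\vert\widetilde{P}(\boldsymbol{y})-P(\boldsymbol{y})\vert + \sup_{\boldsymbol{y}}\vert\widehat{P}(\boldsymbol{y})-\widetilde{P}(\boldsymbol{y})\vert .
\]

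\textbf{Oracle term.} Under model \eqref{eq.heterogeneous_autoregressive} the $\boldsymbol{\eta}_t$ are i.i.d. with CDF $P$, since each is identically distributed and independent of the past. The multivariate Glivenko--Cantelli theorem then gives $\sup_{\boldsymbol{y}}\vert\widetilde{P}-P\vert\to 0$ almost surely, hence in probability. If the assumptions only give weak dependence rather than independence, we would instead use an ergodic-theorem version of Glivenko--Cantelli. This works because the lower-left orthants form a VC class, so pointwise ergodic convergence on a countable dense set together with monotonicity upgrades to uniform convergence.

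\textbf{Perturbation term.} Write $\widehat{\boldsymbol{\eta}}_t=\boldsymbol{\eta}_t+\boldsymbol{\delta}_t$. Expanding,
\[
\widehat{\boldsymbol{\eta}}_t = \widehat{G}^{-1}(\widehat{\boldsymbol{\zeta}}_{\cdot})\bigl(\boldsymbol{\zeta}_t + F(\cdot)-\widehat{F}(\cdot)\bigr),
\qquad
\boldsymbol{\zeta}_t = G(\boldsymbol{\zeta}_\cdot)\boldsymbol{\eta}_t .
\]
So $\boldsymbol{\delta}_t$ is controlled by two quantities: $\Vert \widehat{F}-F\Vert$ along the sample path, and $\vert G_i/\widehat{G}_i - 1\vert$, where the residual inputs to $\widehat{G}$ are themselves perturbed by $\widehat{F}-F$. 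I expect Assumptions 1--3 to provide the following.
\begin{itemize}
\item[(a)] Consistency of $\widehat{F}$ and $\widehat{G}$ (up to the constant scale, which is absorbed as discussed in Section~\ref{section.further_discussion}) in an averaged sense, e.g.\ $\frac{1}{T}\sum_t \Vert\boldsymbol{\delta}_t\Vert\wedge 1\to_p 0$, or a uniform version.
\item[(b)] $G_i$ bounded away from $0$ and Lipschitz.
\item[(c)] $P$ continuous, say with bounded marginal densities.
\end{itemize}

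Given these, fix $\epsilon>0$. For each $t$, either $\Vert\boldsymbol{\delta}_t\Vert_\infty\le\epsilon$ or not. In the first case,
\[
\mathbf{1}_{\boldsymbol{\eta}_t\le \boldsymbol{y}-\epsilon\mathbf{1}}\le \mathbf{1}_{\widehat{\boldsymbol{\eta}}_t\le \boldsymbol{y}}\le \mathbf{1}_{\boldsymbol{\eta}_t\le \boldsymbol{y}+\epsilon\mathbf{1}} .
\]
Averaging over $t$ gives
\[
\widetilde{P}(\boldsymbol{y}-\epsilon\mathbf{1}) - N_\epsilon/T \le \widehat{P}(\boldsymbol{y}) \le \widetilde{P}(\boldsymbol{y}+\epsilon\mathbf{1}) + N_\epsilon/T,
\]
where $N_\epsilon=\#\{t:\Vert\boldsymbol{\delta}_t\Vert_\infty>\epsilon\}$. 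By Markov's inequality and (a), $N_\epsilon/T\to_p 0$. Combining this with the oracle step yields
\[
\sup_{\boldsymbol{y}}\vert\widehat{P}-P\vert \le o_p(1) + \sup_{\boldsymbol{y}}\bigl(P(\boldsymbol{y}+\epsilon\mathbf{1})-P(\boldsymbol{y}-\epsilon\mathbf{1})\bigr).
\]
The last term is at most $2\epsilon\sum_i \Vert p_i\Vert_\infty$ by the union bound over coordinates, or tends to $0$ with $\epsilon$ by uniform continuity of $P$. Letting $\epsilon\downarrow 0$ finishes the proof.

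\textbf{Main obstacle.} The hard part is step (a): converting whatever estimation guarantee the assumptions give for $\widehat{F}$ and $\widehat{G}$ into $\frac{1}{T}\sum_t \min(\Vert\boldsymbol{\delta}_t\Vert,1)\to_p 0$. The difficulties are that $\widehat{G}$ is evaluated at estimated residuals, and that $\widehat{F},\widehat{G}$ depend on the same data. To handle the first, I would use the Lipschitz property of $G$ (or of $A$ in \eqref{eq.transform}) together with the lower bound on $G$ to propagate errors. To handle the second, I would assume the estimation error is measured in sup-norm over a compact set that contains the sample path with high probability; this avoids any sample-splitting argument.
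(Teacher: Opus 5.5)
Your proof is correct. It shares the paper's outer structure: Glivenko--Cantelli for the oracle empirical CDF $\widetilde P$, then a sandwich of $\widehat P(\boldsymbol{y})$ between $\widetilde P(\boldsymbol{y}\mp\epsilon\mathbf{h})$ up to an error not depending on $\boldsymbol{y}$, then uniform continuity of $P$ to let $\epsilon\to 0$. Assumption 1 makes the $\boldsymbol{\eta}_t$ i.i.d., so your ergodic variant is not needed.

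The difference is in how the sandwich is built. The paper replaces each indicator by the smooth bump $g_{\psi,\boldsymbol{y}_i}$ and telescopes the product over coordinates, using only the Lipschitz bound $g_*\psi$. This gives the error term $g_*\psi\sqrt{d}\,(T-q-s)^{-1}\sum_t\|\boldsymbol{\Delta}_t\|$. It then proves $\|\boldsymbol{\Delta}_t\|\le a_T+b_T\sum_i|\boldsymbol{\eta}_{t,i}|$ with $a_T,b_T=o_p(1)$ free of $t$. That step uses sup-norm consistency, $\widehat G_i>c/2$ with probability tending to one, and the bounded gradient of $G_i$ to pass $\widehat{\boldsymbol{\zeta}}-\boldsymbol{\zeta}$ through $G$, which is exactly the propagation you sketch. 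The average of $|\boldsymbol{\eta}_{t,i}|$ is then controlled by the moment bound of Assumption 1.

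Your hard-threshold count $N_\epsilon$ reaches the same sandwich with no smoothing, and it needs less. From the same bound, with probability tending to one $N_\epsilon/(T-q-s)$ is at most the fraction of $t$ with $\|\boldsymbol{\eta}_t\|>M$. That fraction is small for large $M$ by the law of large numbers, so no moment condition on $\boldsymbol{\eta}_t$ is required. The paper's linear error term, as written, uses $\mathbf{E}|\boldsymbol{\eta}_{1,i}|<\infty$.

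Two small corrections.

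First, the actual Assumption 2 is sup-norm consistency over all of $\mathbf{R}^{d\times q}$ and $\mathbf{R}^{d\times s}$, which is your ``uniform version''. So your step (a) goes through as sketched, and the compact-set variant is unnecessary. That variant would also need care: a fixed compact set cannot contain the whole path with high probability when the innovations are unbounded. Your counting argument would only need most of the path inside it.

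Second, drop ``up to the constant scale'' from (a). If $\widehat G\to\kappa G$, then $\boldsymbol{\delta}_t\approx(\kappa^{-1}-1)\boldsymbol{\eta}_t$ does not vanish, and $\widehat P$ converges to the law of $\boldsymbol{\eta}_t/\kappa$. The scale cancels in the forecasts $\widehat G\boldsymbol{\eta}^*$ of Algorithm~\ref{alg.pred}, not in $\widehat P$. With $P$ fixed as in the statement, you need $\widehat G\to G$ exactly, which is what Assumption 2 provides.
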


\begin{remark}
    By assuming $F$ and $G_i, i = 1,\cdots, d$ obey some parametric forms, Assumption 2 can actually be converted to the consistency of the estimated parameters, which is a common condition in the literature such as \cite{NIPS2016_1ce927f8, banerjee2026small}. While the assumption that $\boldsymbol{\eta}_t$ follows an identical distribution may seem restrictive for the validity of ProbRes, the ablation study in Table  \ref{Table.experiment_result_test_distribution_shift} in Appendix demonstrates that ProbRes exhibits only a modest performance decrease under moderate distributional shifts in future observations, highlighting its robustness in practical applications.
\end{remark}

Theorem \ref{theorem.consistent} guarantees that the distribution of the resampled normalized residuals $\boldsymbol{\eta}_{1:J}^*$ in Algorithm \ref{alg.pred} matches that of the true normalized residuals $\boldsymbol{\eta}_{(T+1):(T+J)}$. As a result, Algorithm \ref{alg.pred} effectively captures the distributional information of $\boldsymbol{\eta}_{(T+1):(T+J)}$.

\begin{table}
  \caption{Experimental results on synthetic data. Bold numbers indicate the lowest mean value, while underlined values denote the second-lowest mean among the four methods. Numbers in brackets represent the 95\% margins of error (e.g., the 95\% confidence interval for DLinear + quantile in terms of CRPS is $0.849 \pm 0.021$). All results are computed from five repetitions.}
  \label{Table.experiment_result_synthetic_data}
  \setlength{\tabcolsep}{3pt}
\scriptsize
  \centering
  \begin{tabular}{l | c c c |c c c}
    \toprule
     Dataset &\multicolumn{3}{c|}{GARCH(1, 1)} &\multicolumn{3}{c}{GARCH-in-mean(1,1)} \\
    \midrule
    Metrics & CRPS & MSIS & $\mathrm{ACE}_{90}$ & CRPS & MSIS & $\mathrm{ACE}_{90}$  \\
    \midrule
    Dlinear + quantile & 0.849(0.021) & 7.915(2.303)  & 0.039(0.004) & 0.801(0.009) & \underline{5.937(0.499)} & 0.036(0.013)  \\
     + CQR       & 0.895(0.023) &    \underline{7.728(1.766)} & \textbf{0.016(0.005)} & 0.818(0.002) & 6.511(0.654) & 0.043(0.023)\\
     + bootstrap & \underline{0.834(0.017)} & 8.478(3.776) & 0.034(0.022) &  \underline{0.798(0.007)} & 5.973(0.660) & \underline{0.026(0.023)}  \\
     + ProbRes &  \textbf{0.831(0.008)} & \textbf{7.389(2.405)} & \underline{0.019(0.016)} & \textbf{0.794(0.012)} & \textbf{5.502(0.579)} & \textbf{0.024(0.016)}  \\

    \midrule
    PatchTST + quantile & \underline{0.837(0.020)} & \underline{7.650(2.517)} & 0.031(0.020) & \underline{0.797(0.006)} & \underline{5.352(0.411)} & 0.012(0.008) \\
    + CQR        & 0.866(0.022) & 7.684(1.878) & \underline{0.018(0.007)} &0.807(0.009) &6.002(1.283) & \underline{0.010(0.006)}\\
     + bootstrap & 0.838(0.025) & 8.519(4.171) & 0.046(0.033) & 0.802(0.001) & 5.933(0.896) & 0.018(0.023) \\
     + ProbRes & \textbf{0.831(0.013)} & \textbf{7.413(2.479)} & \textbf{0.017(0.011)} & \textbf{0.795(0.008)} & \textbf{5.273(0.217)} & \textbf{0.009(0.008)} \\

    \midrule

    TimeMixer + quantile & 0.850(0.028) & \underline{8.245(4.107)} & 0.048(0.015) & \underline{0.797(0.008)} & \underline{5.742(0.438)} & 0.024(0.011)  \\
+CQR   &  0.907(0.029) & 8.849(2.246) & \underline{0.019(0.005)} &0.820(0.012) &6.248(1.218) &\textbf{0.006(0.003)}\\
     + bootstrap & \underline{0.834(0.022)} & 8.506(4.142) & 0.045(0.031) & 0.799(0.004) & 6.048(0.605) & 0.024(0.019)  \\

     + ProbRes & \textbf{0.827(0.012)} & \textbf{7.138(2.171)} & \textbf{0.018(0.014)} & \textbf{0.791(0.008)} & \textbf{5.668(0.612)} & \underline{0.018(0.014)} \\
    \bottomrule
  \end{tabular}
  
\end{table}

\section{Numerical experiments}

This section empirically supports our claim that explicitly modeling volatility improves distributional and interval forecasting performance under heteroskedasticity, as demonstrated through both synthetic and real-life data experiments. Due to space limitations, detailed experimental settings---including dataset descriptions, hyperparameter configurations, and evaluation metrics---as well as additional experimental results, are postponed to Section \ref{section.additional_experimental} in Appendix.


\textbf{Metrics.} The evaluation metrics are CRPS, MSIS, and $\mathrm{ACE}_{90}$; detailed definitions are provided in Section~\ref{section.metrics} of the online supplement. These metrics respectively assess the alignment of predictive distributions with future observations, the accuracy of prediction intervals, and coverage calibration. A well-performing probabilistic forecasting method attains low values across all three metrics.

\textbf{Backbone architectures and baselines.} The backbone architectures include DLinear \cite{Zeng_Chen_Zhang_Xu_2023}, PatchTST \cite{nie2023a}, and TimeMixer \cite{wang2023timemixer}. These models span distinct architectural paradigms: DLinear employs separate linear components to model trend and seasonal patterns in time series; PatchTST is based on a Transformer architecture; and TimeMixer adopts an MLP-based design. Evaluating across these backbones ensures that the observed improvements of our method are not tied to specific architectures. Notably, all three methods are originally designed for point forecasting and cannot directly produce probabilistic forecasts without additional wrappers.  

\begin{table*}[t]
  \caption{Experimental results on real-life data. Meaning of notations of this table coincides with Table \ref{Table.experiment_result_synthetic_data}. Tables \ref{Table.experiment_result_real_life_data_appendix_1} and \ref{Table.experiment_result_real_life_data_appendix_2} in Appendix  contain  95\% margins of error. }
  \label{Table.experiment_result_real_life_data_main}
  \setlength{\tabcolsep}{3pt}
  \small
  \centering
  \begin{tabularx}{\textwidth}{l | X X X | X X X | X X X}        
    \toprule
     Dataset &\multicolumn{3}{c|}{Exchange} &\multicolumn{3}{c|}{S\&P 500 Industrial} &\multicolumn{3}{c}{Electricity}\\
    \midrule
    Metrics & CRPS & MSIS & $\mathrm{ACE}_{90}$ & CRPS & MSIS & $\mathrm{ACE}_{90}$  & CRPS & MSIS & $\mathrm{ACE}_{90}$\\
    \midrule
    Dlinear + quantile  & \underline{0.015} & 38.02 & \underline{0.053} & \underline{0.817} & 5.494 & 0.048  & 0.065 & 8.380 & \underline{0.065}\\
    + bootstrap  & 0.017 & 40.71 & 0.235 & 0.826 & 5.605 & \textbf{0.012}  & \underline{0.057} & \underline{7.986} & 0.096\\
    + CQR & \underline{0.015} & \underline{30.50} & 0.059 & 0.833 & \underline{5.333} & 0.031 &0.166 &17.18 &0.087\\
    + ProbRes  & \textbf{0.010} & \textbf{20.83} & \textbf{0.048} & \textbf{0.816} & \textbf{5.222} & \underline{0.030}  & \textbf{0.054} & \textbf{7.602} & \textbf{0.055}\\
    \midrule
    PatchTST + quantile & \underline{0.013} & \underline{38.65} & 0.104 & \underline{0.813} & 5.217 & 0.044  & 0.072 & 6.758 & \underline{0.044} \\
     + bootstrap        & 0.026 & 101.5 & 0.365 & 0.821 & 5.537 & \textbf{0.013}   &   \underline{0.064}  & \underline{6.599} & 0.081\\
     + CQR        & 0.016 & 42.03 & \underline{0.060} & 0.821 & \underline{5.190} & 0.032  & 0.106  & 10.48 & 0.086\\
     + ProbRes          & \textbf{0.012} & \textbf{22.73} & \textbf{0.035} & \textbf{0.811} & \textbf{5.170} & \underline{0.024} &  \textbf{0.063} & \textbf{5.825} & \textbf{0.041}\\
     \midrule
      TimeMixer + quantile & 0.029 & 77.00 & 0.056 & \textbf{0.803} & \textbf{4.966} & 0.031   & \underline{0.254} & 14.47 & 0.064\\
      + bootstrap   &  0.016 & \underline{63.22} & \underline{0.026} & 0.822 & 5.653 & \underline{0.018}   & 0.271 & \textbf{9.999} & 0.095\\
      + CQR   & 0.029 & 67.61 & 0.070 & 0.813 & \underline{5.012} & 0.043   & 0.258 & 12.67 & \underline{0.030}\\
      + ProbRes     & \textbf{0.013} & \textbf{32.64} & \textbf{0.015} & \textbf{0.803} & 5.073 & \textbf{0.012}   & \textbf{0.235} & \underline{10.67} & \textbf{0.007}\\
    \bottomrule
  \end{tabularx}
\end{table*}

The baselines include quantile regressions, i.e., training with the pinball loss \cite{10.3150/10-BEJ267}; the autoregressive bootstrap \cite{PAN20161, https://doi.org/10.1111/jtsa.12739}, and the conformalized quantile regression (CQR)  \cite{romano2019conformalized}. These baselines are also compatible with a wide range of model architectures. However, as noted in \cite{FAN2019261}, they do not explicitly model conditional heteroskedasticity, making them natural counterparts to our method.

\subsection{Synthetic data experiment}

The synthetic data are generated from GARCH(1,1) and GARCH-in-mean(1,1) processes with heavy-tailed  innovations (t-distribution with 5 degrees of freedom), which respectively reflect  volatility clustering and volatility-mean interaction effects in financial time series \cite{tsay2005analysis}, along with strong non-Gaussianity. Both datasets exhibit strong heteroskedasticity. A detailed description of the data-generating processes is provided in Section \ref{section.data_set} of the Appendix.

 Experimental results in Table \ref{Table.experiment_result_synthetic_data} show that ProbRes consistently achieves strong performance in both forecasting future distributions and producing accurate, well-calibrated prediction intervals across most datasets and backbone models. These gains suggest that its improvements are architecture-agnostic and stem from better modeling of conditional heteroskedasticity. In contrast, baselines that do not explicitly model conditional heteroskedasticity either produce reasonably accurate predictive distributions but excessively wide prediction intervals, or fail to adequately capture the future distribution.

\subsection{Real-data experiments}

\textbf{Univariate probabilistic forecasting: }  Table~\ref{Table.experiment_result_real_life_data_main}, together with Tables~\ref{Table.experiment_result_real_life_data_appendix_1} and~\ref{Table.experiment_result_real_life_data_appendix_2} in the Appendix, reports probabilistic forecasting results on three real-world datasets. Compared with the baselines, ProbRes achieves the best performance on most evaluation metrics for both distributional accuracy and prediction interval quality, across all datasets and model architectures. This indicates that the improvements of ProbRes are architecture-agnostic, highlighting the benefit of explicitly modeling conditional heteroskedasticity in probabilistic forecasting.

In addition, the validity of ProbRes does not rely on specific parametric assumptions about the distribution of normalized residuals, which further contributes to its robustness. As illustrated in Figure~\ref{figure.histgram}, normalized residuals in real-world datasets rarely follow standard parametric families such as Gaussian or Student-t distributions, and may exhibit multimodality or heavy tails. Such distributional properties can harm the performance of baseline methods \cite{JMLR:v26:24-0589}.
\begin{figure}[t]
\centering
\begin{subfigure}{0.45\textwidth}
    \centering
    \includegraphics[width=\linewidth]{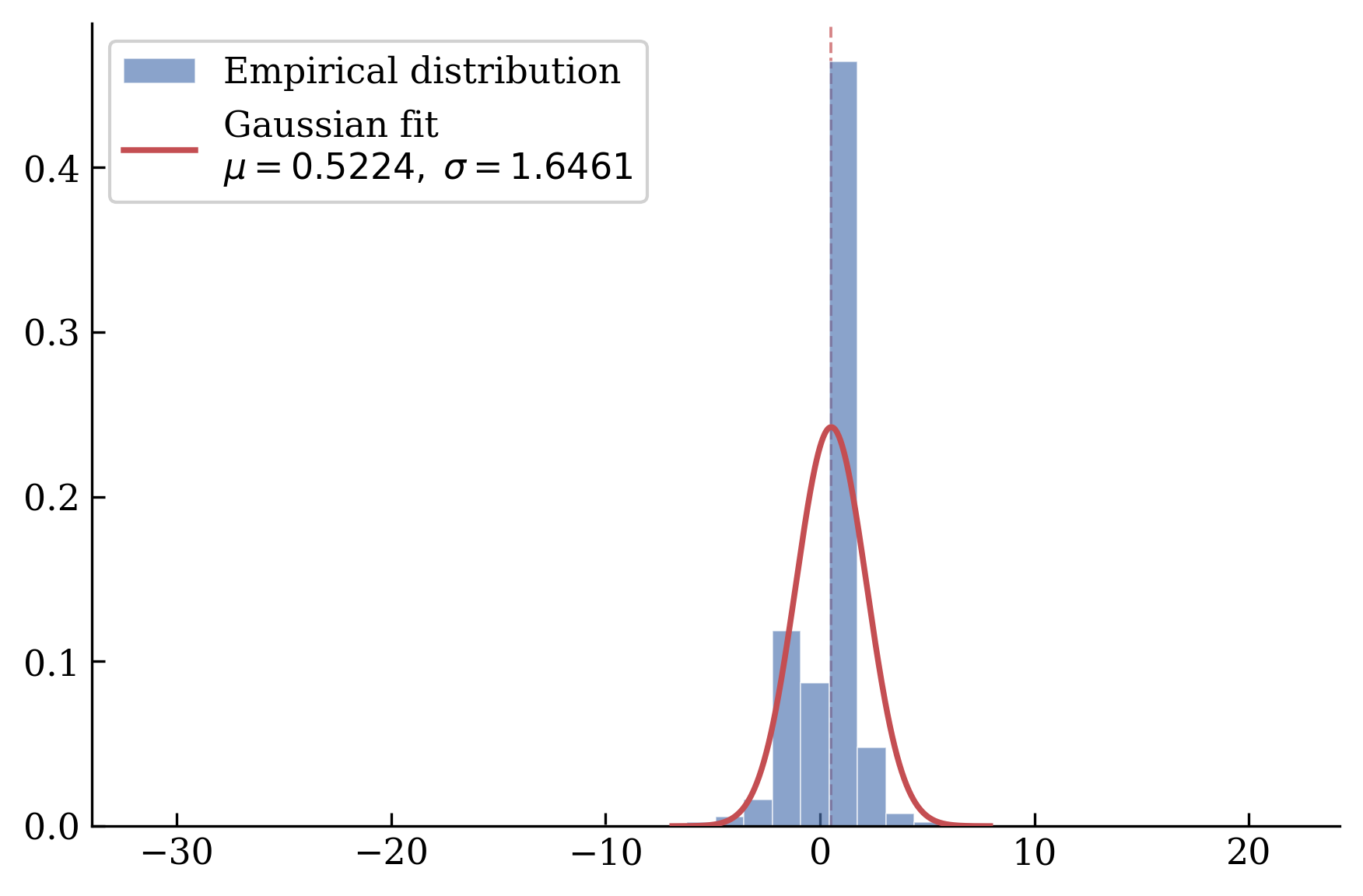}
    \caption{Exchange}
\end{subfigure}
\hfill
\begin{subfigure}{0.45\textwidth}
    \centering
    \includegraphics[width=\linewidth]{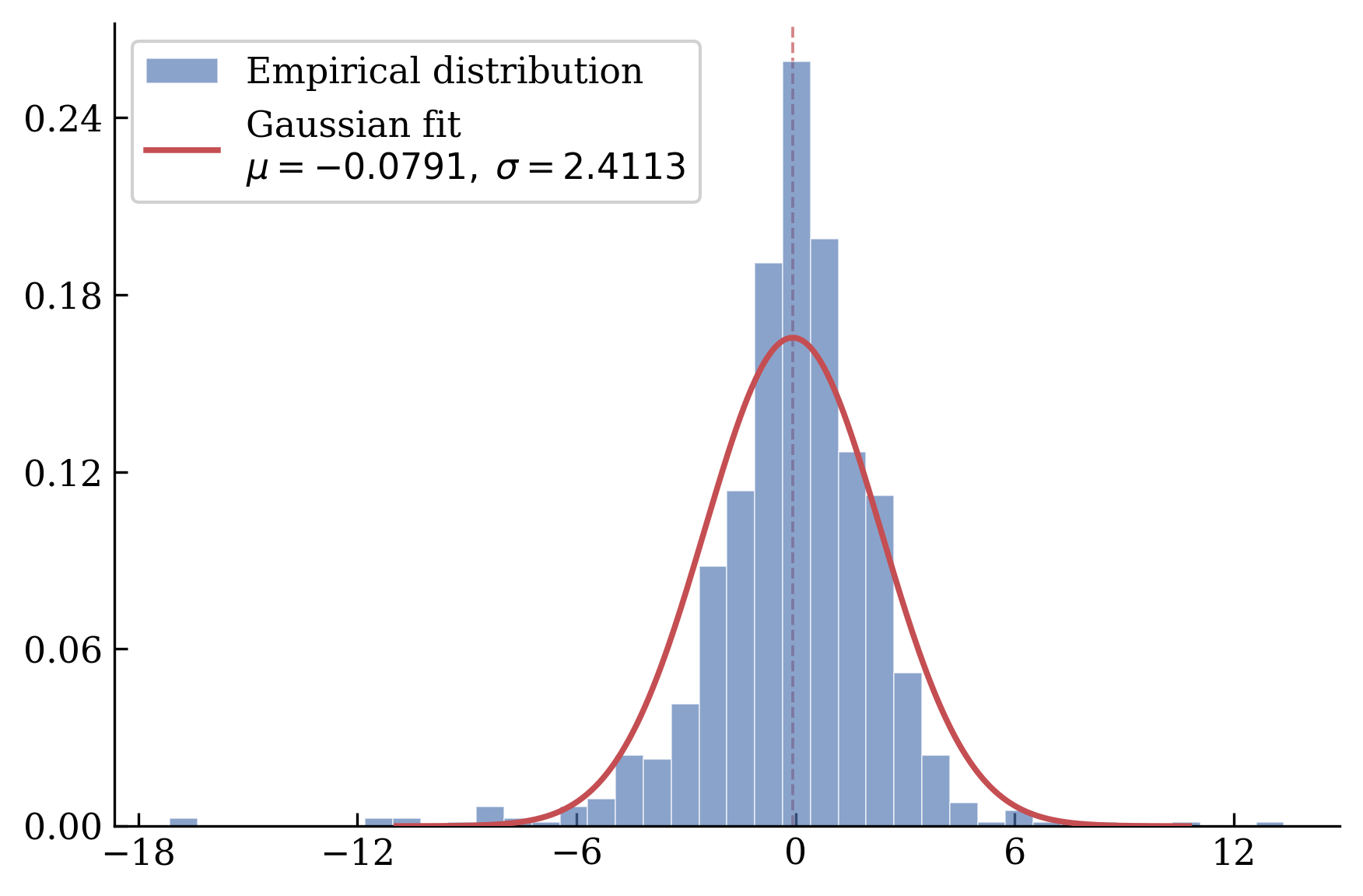}
    \caption{S\&P 500  Industrial}
\end{subfigure}
\caption{Histograms of the normalized fitted residuals $\widehat{\boldsymbol{\eta}}_t$ for the \textit{Exchange} and \textit{S\&P 500 Industrial} datasets, together with the corresponding fitted Gaussian density curves. The histograms indicate that the distribution of $\widehat{\boldsymbol{\eta}}_t$ either exhibits strong concentration around zero (Exchange) or displays heavy tails (S\&P 500 Industrial). These deviations from normality stress the necessity of accounting for residual distributional characteristics in probabilistic forecasting.}
\label{figure.histgram}
\end{figure}

\begin{wraptable}{r}{0.55\textwidth}
  \caption{Numerical experiment results on multivariate time series datasets. The interpretation of the values and the use of boldface are the same as in Table \ref{Table.experiment_result_synthetic_data}. The full table with 95\% margins of error is provided in Table \ref{Table.experiment_result_multivariate_appendix} in Appendix.}
  \vspace{-5pt}
  \setlength{\tabcolsep}{2pt}
  \scriptsize
  \label{Table.experiment_result_multivariate}
  \centering
  \begin{tabular}{l|c c| c c | c c}
    \toprule
    \multicolumn{1}{l|}{Dataset} & \multicolumn{2}{c}{ETTh1} & \multicolumn{2}{c}{ETTh2}  &  \multicolumn{2}{c}{Electricity}\\
    \midrule
    Metrics     & CRPS$_{sum}$  & ES   &CRPS$_{sum}$    & ES &CRPS$_{sum}$  & ES \\
    \midrule
   VEC-LSTM & 0.184 & 3.873 & 0.095 & 6.423 & 0.441  & 48684\\
    +ProbRes   & \textbf{0.182}  & \textbf{3.503} & \textbf{0.087} & \textbf{6.067} & \textbf{0.301} &  \textbf{41398}   \\
    \midrule
    TMDM  & 0.456  & 13.344 & 0.092 & \textbf{6.933} & 0.655
      & 87761
    \\
    +ProbRes & \textbf{0.397}  & \textbf{11.341} & \textbf{0.092}  & 7.326 & \textbf{0.292}  & \textbf{37322}\\
    \bottomrule
  \end{tabular}

  \textbf{Remark:}   CRPS$_{sum}$ here denotes the sum of element-wise CRPS values, a commonly used metric for evaluating multivariate probabilistic forecasts \cite{NEURIPS2019_0b105cf1}.
\vspace{-5pt}
\end{wraptable}

\textbf{Multivariate probabilistic forecasting: } We conduct multivariate probabilistic forecasting experiments to evaluate ProbRes's ability to generate accurate probabilistic forecasts in the presence of element-wise dependence. The experiment uses three  real-world datasets: \textit{ETTh1, ETTh2, Electricity} (see Section \ref{section.data_set} in Appendix for dataset details). In addition to the sum of element-wise CRPS values \cite{NEURIPS2019_0b105cf1}, we also report the energy score (ES) \cite{chung2024samplingbased} (see Section \ref{section.metrics} in the Appendix) to evaluate multivariate probabilistic forecasts by capturing spatial dependence.

The results are demonstrated in Table \ref{Table.experiment_result_multivariate}, using baselines \textit{VEC-LSTM} \cite{NEURIPS2019_0b105cf1} and \textit{TMDM} \cite{li2024transformermodulated}. Since the baselines are probabilistic forecasting methods, we apply ProbRes only to their mean forecasts to ensure a fair comparison. ProbRes achieves improvements across all metrics for VEC-LSTM and on most metrics for TMDM (e.g., the CRPS of TMDM decreases from 0.655 to 0.292 after incorporating ProbRes on the \textit{Electricity} dataset). The result highlights ProbRes’s ability to capture spatial dependence in multivariate time series beyond modeling conditional heteroskedasticity.

\section{Discussion}
\label{section.discussion}

This paper introduces ProbRes, a post-hoc probabilistic calibration method that jointly models the conditional mean, volatility, and the distribution of fitted residuals, and leverages these components for forecasting via resampling. By explicitly capturing volatility dynamics and employing resampling, ProbRes is suitable for handling heteroskedastic data with complex residual distributions. We further provide theoretical guarantees supporting the validity of ProbRes.

In addition, ProbRes is flexible with respect to the choice of backbone models. It can be applied to existing point forecasting models to produce probabilistic forecasts, without requiring modifications to model architectures or loss functions.

\textbf{Limitations and Future Work.} One main limitation of our work lies in  the computational complexity of the algorithm: 
Suppose the rounds  of bootstrap repetition is $B$ and the prediction length is $J,$ then the computational complexity in forecasting is of order $BJ.$  Concerning this, one potential future direction of this work involves leveraging advanced subsampling techniques, like those in \cite{10.1093/biomet/asae039}, to decrease computational complexity.

\appendix

\section{Proof of Theorem \ref{theorem.consistent}}
\label{section.proof_theorem}
To validate Theorem \ref{theorem.consistent}, we propose the following technical assumptions.

\textbf{Assumptions: }

1. $\boldsymbol{\eta}_{t}, t = 1,2,\cdots,$ are independent and identically distributed with continuous cumulative distribution function $P(\cdot):\mathbf{R}^d\to \mathbf{R}$. Suppose $\mathbf{E}\left[\boldsymbol{\eta}_{1}\right] = 0$ and $\mathrm{Var}(\boldsymbol{\eta}_{1,i})\leq C$ for a constant $C$ and any $i = 1,\cdots, d$.

2. For a vector $\mathbf{x}\in\mathbf{R}^d,$ define  $|| \mathbf{x} || $ as its Euclidean norm. We suppose  the conditional mean and volatility function estimator satisfy 
\begin{align*}
    \sup_{\mathbf{Y}\in\mathbf{R}^{d\times q}}||\widehat{F}(\mathbf{Y}) - F(\mathbf{Y})||\to_p 0\quad
    \text{and}\quad\sup_{\mathbf{Y}\in\mathbf{R}^{d\times s}}\vert \widehat{G}_i(\mathbf{Y}) - G_i(\mathbf{Y})\vert\to_p 0,
\end{align*}
where $i = 1,2,\cdots,d,$ and $\to_p$ denotes convergence in probability.  

3. Suppose $G_i(\cdot)$ is continuous differentiable with bounded gradient, i.e.,
$$
\sup_{\mathbf{Y}\in\mathbf{R}^{d\times s}} || \nabla_\mathbf{Y} G_i(\mathbf{Y})|| < \infty
$$
for $i = 1,\cdots, d.$ Furthermore,  suppose there exists a constant $c > 0$ such that 
$$
\inf_{\mathbf{Y}\in\mathbf{R}^{d\times s}} \vert\displaystyle G_i(\mathbf{Y})\vert > c
$$
for $i = 1,\cdots, d.$

With those assumptions, we demonstrate that Theorem \ref{theorem.consistent} holds true.

\begin{proof}[Proof of Theorem \ref{theorem.consistent}]
    For any vector $\mathbf{y} = (\mathbf{y}_1,\cdots,\mathbf{y}_d)^\top\in\mathbf{R}^d,$ define 
    $$\widetilde{P}(\boldsymbol{y}) = \frac{1}{T - q - s}\sum_{t = s + q + 1}^T\mathbf{1}_{\boldsymbol{\eta}_{t}\leq \boldsymbol{y}}.
    $$
    From Glivenko-Cantelli Theorem, like Theorem 4 of \cite{Sharipov2011}, we have 
    \begin{align*}
        \sup_{\boldsymbol{y}\in\mathbf{R}^d}\vert \widetilde{P}(\boldsymbol{y}) - P(\boldsymbol{y})\vert\to_p 0.
    \end{align*}
    On the other hand, define the functions 
    $$
    g_0(u) = (1 - \min(1, \max(u,0))^4)^4\quad\text{and }\quad g_{\psi,t}(x) = g_0(\psi(x - t)),
    $$
    as demonstrated in \cite{10.1093/biomet/asz020}, which satisfy the following property: $g_0(\cdot)$ is third-order continuous differentiable, $g_0(u) = 1$ if $u\leq 0$, $g_0(u) = 0$ if $u\geq 1$, and  
    \begin{align*}
        g_* = \sup_{u\in\mathbf{R}}\{
        \vert
        g_0^\prime(u)
        \vert + \vert
        g_0^{\prime\prime}(u)
        \vert + \vert
        g_0^{\prime\prime\prime}(u)
        \vert
        \} < \infty,\ \mathbf{1}_{x\leq t}\leq g_{\psi,t}(x)\leq \mathbf{1}_{x\leq t + \psi^{-1}},
        \sup_{x,t\in\mathbf{R}}\vert g_{\psi,t}^\prime (x)\vert\leq g_*\psi.
    \end{align*}
    Define 
    \begin{align*}
        \boldsymbol{\Delta}_t & = \widehat{\boldsymbol{\eta}}_t - \boldsymbol{\eta}_t\\
        &= \widehat{G}^{-1}\left(\widehat{\boldsymbol{\zeta}}_{t-s},\cdots \widehat{\boldsymbol{\zeta}}_{t-1}\right)\left(F(\mathbf{x}_{t-q},\cdots,\mathbf{x}_{t-1}) - \widehat{F}(\mathbf{x}_{t-q},\cdots,\mathbf{x}_{t-1})\right)\\
        & + \widehat{G}^{-1}\left(\widehat{\boldsymbol{\zeta}}_{t-s},\cdots \widehat{\boldsymbol{\zeta}}_{t-1}\right)\left(
        G\left(\boldsymbol{\zeta}_{t-s},\cdots \boldsymbol{\zeta}_{t-1}\right) - \widehat{G}\left(\widehat{\boldsymbol{\zeta}}_{t-s},\cdots \widehat{\boldsymbol{\zeta}}_{t-1}\right)
        \right)\boldsymbol{\eta}_t.
    \end{align*}
    From the definition \eqref{eq.F_hat}
    \begin{align*}
        \widehat{P}(\boldsymbol{y}) = \frac{1}{T - q - s}\sum_{t = s + q + 1}^T \mathbf{1}_{\boldsymbol{\eta}_t +\boldsymbol{\Delta}_t\leq \boldsymbol{y} }\leq \frac{1}{T - q - s}\sum_{t = s + q + 1}^T\prod_{i = 1}^d 
        g_{\psi,\boldsymbol{y}_i}(\boldsymbol{\eta}_{t,i} +\boldsymbol{\Delta}_{t,i}).
    \end{align*}
    From Taylor expansion,
    \begin{align*}
        &\vert \prod_{i = 1}^d 
        g_{\psi,\boldsymbol{y}_i}(\boldsymbol{\eta}_{t,i} +\boldsymbol{\Delta}_{t,i})
        - \prod_{i = 1}^d 
        g_{\psi,\boldsymbol{y}_i}(\boldsymbol{\eta}_{t,i})
        \vert\\
        &\leq \sum_{i = 1}^d (\prod_{j = 1}^{i - 1}g_{\psi,\boldsymbol{y}_i}(\boldsymbol{\eta}_{t,i} +\boldsymbol{\Delta}_{t,i})(g_{\psi,\boldsymbol{y}_i}(\boldsymbol{\eta}_{t,i} +\boldsymbol{\Delta}_{t,i}) - g_{\psi,\boldsymbol{y}_i}(\boldsymbol{\eta}_{t,i}))\prod_{j = i + 1}^dg_{\psi,\boldsymbol{y}_i}(\boldsymbol{\eta}_{t,i})\\
        &\leq \sum_{i = 1}^d\vert g_{\psi,\boldsymbol{y}_i}(\boldsymbol{\eta}_{t,i} +\boldsymbol{\Delta}_{t,i}) - g_{\psi,\boldsymbol{y}_i}(\boldsymbol{\eta}_{t,i})\vert\leq g_*\psi\sum_{i = 1}^d\vert \boldsymbol{\Delta}_{t,i}\vert\leq g_*\psi\sqrt{d}
        ||
        \boldsymbol{\Delta}_{t}
        ||.
    \end{align*}
    Therefore, 
    \begin{align*}
        &\frac{1}{T - q - s}\sum_{t = s + q + 1}^T\prod_{i = 1}^d 
        g_{\psi,\boldsymbol{y}_i}(\boldsymbol{\eta}_{t,i} +\boldsymbol{\Delta}_{t,i})\\
        &\leq \frac{1}{T - q - s}\sum_{t = s + q + 1}^T\prod_{i = 1}^d 
        g_{\psi,\boldsymbol{y}_i}(\boldsymbol{\eta}_{t,i}) 
        + \frac{g_*\psi\sqrt{d}}{T - q - s}\sum_{t = s + q + 1}^T||
        \boldsymbol{\Delta}_{t}
        ||\\
        &\leq \frac{1}{T - q -  s}\sum_{t = s + q + 1}^T \mathbf{1}_{\boldsymbol{\eta}_t\leq \boldsymbol{y} + \psi^{-1}}
        + \frac{g_*\psi\sqrt{d}}{T - q - s}\sum_{t = s + q + 1}^T ||\boldsymbol{\Delta}_t||\\
        &= \widetilde{P}(\boldsymbol{y} + \psi^{-1}\mathbf{h}) +  \frac{g_*\psi\sqrt{d}}{T - q - s}\sum_{t = s+ q + 1}^T ||
        \boldsymbol{\Delta}_{t}
        ||,
    \end{align*}
    where $\mathbf{h} = (1,1,\cdots, 1)^\top$. Similarly, 
    \begin{align*}
        \widehat{P}(\boldsymbol{y}) &\geq \frac{1}{T - q -  s}\sum_{t = s + q + 1}^T\prod_{i = 1}^d 
        g_{\psi,\boldsymbol{y}_i - \psi^{-1}}(\boldsymbol{\eta}_{t,i} +\boldsymbol{\Delta}_{t,i})\\
        &\geq \frac{1}{T - q - s}\sum_{t = s + q + 1}^T\prod_{i = 1}^d 
        g_{\psi,\boldsymbol{y}_i - \psi^{-1}}(\boldsymbol{\eta}_{t,i}) 
        - \frac{g_*\psi\sqrt{d}}{T - q - s}\sum_{t = s + q + 1}^T||\boldsymbol{\Delta}_t||\\
        &\geq \widetilde{P}(\boldsymbol{y} - \psi^{-1}\mathbf{h}) -  \frac{g_*\psi\sqrt{d}}{T - q - s}\sum_{t = s + q + 1}^T ||\boldsymbol{\Delta}_t||.
    \end{align*}
With probability tending to $1$, 
    \begin{align*}
\inf_{\mathbf{Y}\in\mathbf{R}^{d\times s}}\widehat{G}_i(\mathbf{Y})
\geq \inf_{\mathbf{Y}\in\mathbf{R}^{d\times s}}G_i(\mathbf{Y}) - \sup_{\mathbf{Y}\in\mathbf{R}^{d\times s}}\vert \widehat{G}_i(\mathbf{Y}) - G_i(\mathbf{Y})\vert > c/2. 
    \end{align*}
If that happens for $i = 1,\cdots, d,$ we have  
\begin{equation}
\begin{aligned}
&||\widehat{G}^{-1}\left(\widehat{\boldsymbol{\zeta}}_{t-s},\cdots \widehat{\boldsymbol{\zeta}}_{t-1}\right)\left(F(\mathbf{x}_{t-q},\cdots,\mathbf{x}_{t-1}) - \widehat{F}(\mathbf{x}_{t-q},\cdots,\mathbf{x}_{t-1})\right)||\\
&\leq \frac{2}{c}\sup_{\mathbf{Y}\in\mathbf{R}^{d\times q}}||F(\mathbf{Y}) - \widehat{F}(\mathbf{Y})||\to_p 0.
\end{aligned}
\label{eq.f_difference}
\end{equation}
On the other hand, for any $i = 1,\cdots,d,$ the $i$th element of 

\noindent $\widehat{G}^{-1}\left(\widehat{\boldsymbol{\zeta}}_{t-s},\cdots \widehat{\boldsymbol{\zeta}}_{t-1}\right)\left(
        G\left(\boldsymbol{\zeta}_{t-s},\cdots \boldsymbol{\zeta}_{t-1}\right) - \widehat{G}\left(\widehat{\boldsymbol{\zeta}}_{t-s},\cdots \widehat{\boldsymbol{\zeta}}_{t-1}\right)
        \right)\boldsymbol{\eta}_t$ is 
\begin{align*}
    \frac{G_i\left(\boldsymbol{\zeta}_{t-s},\cdots \boldsymbol{\zeta}_{t-1}\right) - \widehat{G}_i\left(\widehat{\boldsymbol{\zeta}}_{t-s},\cdots \widehat{\boldsymbol{\zeta}}_{t-1}\right)}{\widehat{G}_i\left(\widehat{\boldsymbol{\zeta}}_{t-s},\cdots \widehat{\boldsymbol{\zeta}}_{t-1}\right)}\boldsymbol{\eta}_{t,i}.
\end{align*}
and
\begin{align*}
    &\vert
    \frac{G_i\left(\boldsymbol{\zeta}_{t-s},\cdots \boldsymbol{\zeta}_{t-1}\right) - \widehat{G}_i\left(\widehat{\boldsymbol{\zeta}}_{t-s},\cdots \widehat{\boldsymbol{\zeta}}_{t-1}\right)}{\widehat{G}_i\left(\widehat{\boldsymbol{\zeta}}_{t-s},\cdots \widehat{\boldsymbol{\zeta}}_{t-1}\right)}\boldsymbol{\eta}_{t,i}
    \vert\\
    &\leq \frac{2\vert \boldsymbol{\eta}_{t,i}\vert}{c}\left(\vert 
    G_i\left(\boldsymbol{\zeta}_{t-s},\cdots \boldsymbol{\zeta}_{t-1}\right) - G_i\left(\widehat{\boldsymbol{\zeta}}_{t-s},\cdots \widehat{\boldsymbol{\zeta}}_{t-1}\right)
    \vert\right.\\ 
    &\left. + \vert G_i\left(\widehat{\boldsymbol{\zeta}}_{t-s},\cdots \widehat{\boldsymbol{\zeta}}_{t-1}\right) - \widehat{G}_i\left(\widehat{\boldsymbol{\zeta}}_{t-s},\cdots \widehat{\boldsymbol{\zeta}}_{t-1}\right)\vert\right)
\end{align*}
From Assumption 2,
\begin{equation}
    \vert G_i\left(\widehat{\boldsymbol{\zeta}}_{t-s},\cdots \widehat{\boldsymbol{\zeta}}_{t-1}\right) - \widehat{G}_i\left(\widehat{\boldsymbol{\zeta}}_{t-s},\cdots \widehat{\boldsymbol{\zeta}}_{t-1}\right)\vert
    \leq \sup_{\mathbf{Y}\in\mathbf{R}^{d\times s}}\vert G_i\left(\mathbf{Y}\right) - \widehat{G}_i\left(\mathbf{Y}\right)\vert\to_p 0.
\label{eq.sigma_delta}
\end{equation}
On the other hand, for any $t = q + 1,\cdots, T,$
\begin{align*}
    ||
    \widehat{\boldsymbol{\zeta}}_{t} - \boldsymbol{\zeta}_{t}
    || &= || F(\mathbf{x}_{t-q},\cdots,\mathbf{x}_{t-1}) - \widehat{F}(\mathbf{x}_{t-q},\cdots, \mathbf{x}_{t-1})||\\
       &\leq \sup_{\mathbf{Y}\in\mathbf{R}^{d\times q}} || F(\mathbf{Y}) - \widehat{F}(\mathbf{Y})||\to_p 0.
\end{align*}
Define the matrix
\begin{align*}
    \boldsymbol{\Gamma} = \left[
    \begin{matrix}
        \widehat{\boldsymbol{\zeta}}_{t-s} - \boldsymbol{\zeta}_{t-s} & \cdots & \widehat{\boldsymbol{\zeta}}_{t-1} - \boldsymbol{\zeta}_{t-1}
    \end{matrix}
    \right],
\end{align*}
from Taylor's expansion,
\begin{equation}
\begin{aligned}
    \vert 
    G_i\left(\boldsymbol{\zeta}_{t-s},\cdots \boldsymbol{\zeta}_{t-1}\right) - G_i\left(\widehat{\boldsymbol{\zeta}}_{t-s},\cdots \widehat{\boldsymbol{\zeta}}_{t-1}\right)
    \vert &= 
    \vert
    \sum_{i = 1}^d\sum_{j = 1}^s(\nabla_\mathbf{Z} G_i(\mathbf{Z}))_{ij}\boldsymbol{\Gamma}_{ij}
    \vert\\
    & \leq \sum_{i = 1}^d\sum_{j = 1}^s \vert \nabla_\mathbf{Z} G_i(\mathbf{Z}))_{ij}\vert \vert \boldsymbol{\Gamma}_{ij}\vert\\
    & \leq Cds \sup_{\mathbf{Y}\in\mathbf{R}^{d\times q}} || F(\mathbf{Y}) - \widehat{F}(\mathbf{Y})||,
\end{aligned}
\label{eq.delta_zeta}
\end{equation}
where $\mathbf{Z}\in\mathbf{R}^{d\times s}$ is a random matrix. From eq.\eqref{eq.f_difference},  eq.\eqref{eq.sigma_delta} and eq.\eqref{eq.delta_zeta}, with probability tending to 1
\begin{align*}
    ||\boldsymbol{\Delta}_t|| &\leq  \frac{2}{c}\sup_{\mathbf{Y}\in\mathbf{R}^{d\times q}}||F(\mathbf{Y}) - \widehat{F}(\mathbf{Y})|| + \sqrt{\sum_{i = 1}^d \left(\frac{G_i\left(\boldsymbol{\zeta}_{t-s},\cdots \boldsymbol{\zeta}_{t-1}\right) - \widehat{G}_i\left(\widehat{\boldsymbol{\zeta}}_{t-s},\cdots \widehat{\boldsymbol{\zeta}}_{t-1}\right)}{\widehat{G}_i\left(\widehat{\boldsymbol{\zeta}}_{t-s},\cdots \widehat{\boldsymbol{\zeta}}_{t-1}\right)}\boldsymbol{\eta}_{t,i}\right)^2}\\
    &\leq \frac{2}{c}\sup_{\mathbf{Y}\in\mathbf{R}^{d\times q}}||F(\mathbf{Y}) - \widehat{F}(\mathbf{Y})|| + \frac{2\sqrt{d}}{c}\max_{i = 1,\cdots,d}\vert \boldsymbol{\eta}_{t,i}\vert\times\vert G_i\left(\boldsymbol{\zeta}_{t-s},\cdots \boldsymbol{\zeta}_{t-1}\right) - \widehat{G}_i\left(\widehat{\boldsymbol{\zeta}}_{t-s},\cdots \widehat{\boldsymbol{\zeta}}_{t-1}\right)\vert\\
    &\leq \frac{2}{c}\sup_{\mathbf{Y}\in\mathbf{R}^{d\times q}}||F(\mathbf{Y}) - \widehat{F}(\mathbf{Y})|| + \frac{2\sqrt{d}}{c}\left(\sum_{i = 1}^d\vert \boldsymbol{\eta}_{t,i}\vert\right)\left(\sup_{\mathbf{Y}\in\mathbf{R}^{d\times s}}\vert G_i\left(\mathbf{Y}\right) - \widehat{G}_i\left(\mathbf{Y}\right)\vert\right)\\
    &+ \frac{2\sqrt{d}}{c}\left(\sum_{i = 1}^d\vert \boldsymbol{\eta}_{t,i}\vert\right)\left(Cds \sup_{\mathbf{Y}\in\mathbf{R}^{d\times q}} || F(\mathbf{Y}) - \widehat{F}(\mathbf{Y})||\right).
\end{align*}
Since 
\begin{align*}
    \frac{\psi\sqrt{d}}{T - q - s}\sum_{t = s + q + 1}^T ||\boldsymbol{\Delta}_t|| & \leq \frac{2\psi\sqrt{d}}{c}\sup_{\mathbf{Y}\in\mathbf{R}^{d\times q}}||F(\mathbf{Y}) - \widehat{F}(\mathbf{Y})||\\
    & + \frac{2\psi d}{c(T - q - s)}\sum_{i = 1}^d \sup_{\mathbf{Y}\in\mathbf{R}^{d\times s}}\vert G_i\left(\mathbf{Y}\right) - \widehat{G}_i\left(\mathbf{Y}\right)\vert\sum_{t = s + q + 1}^T\vert\boldsymbol{\eta}_{t,i}\vert\\
    & + \frac{2C\psi d^2s}{c(T-q-s)}\sup_{\mathbf{Y}\in\mathbf{R}^{d\times q}} || F(\mathbf{Y}) - \widehat{F}(\mathbf{Y})||\sum_{i = 1}^d\sum_{t = s + q + 1}^T\vert\boldsymbol{\eta}_{t,i}\vert\\
    &\leq \frac{2\psi\sqrt{d}}{c}\sup_{\mathbf{Y}\in\mathbf{R}^{d\times q}}||F(\mathbf{Y}) - \widehat{F}(\mathbf{Y})||\\
    & + \frac{2\psi d}{c(T - q - s)}\left(\max_{i = 1,\cdots,d}\sup_{\mathbf{Y}\in\mathbf{R}^{d\times s}}\vert G_i\left(\mathbf{Y}\right) - \widehat{G}_i\left(\mathbf{Y}\right)\vert\right)\left(\sum_{i = 1}^d \sum_{t = s + q + 1}^T\vert\boldsymbol{\eta}_{t,i}\vert\right)\\
    & + \frac{2C\psi d^2s}{c(T-q-s)}\sup_{\mathbf{Y}\in\mathbf{R}^{d\times q}} || F(\mathbf{Y}) - \widehat{F}(\mathbf{Y})||\sum_{i = 1}^d\sum_{t = s + q + 1}^T\vert\boldsymbol{\eta}_{t,i}\vert,
\end{align*}
and 
\begin{align*}
    \mathbf{E}\left[\frac{1}{T -q - s}\sum_{i = 1}^d\sum_{t = s + q + 1}^T\vert\boldsymbol{\eta}_{t,i}\vert\right] = \sum_{i = 1}^d\mathbf{E}\left[\vert \boldsymbol{\eta}_{1,i}\vert\right] < \infty.
\end{align*}
According to Assumption 2, 
    \begin{align*}
        \frac{\psi\sqrt{d}}{T - q - s}\sum_{t = s + q + 1}^T ||\boldsymbol{\Delta}_t||\to_p 0,
    \end{align*}
    and the result is proven according to the continuity of $P(\cdot),$ and by setting $\psi \to \infty$.
\end{proof}


\section{Additional experimental results}
\label{section.additional_experimental}

\subsection{Datasets}
\label{section.data_set}
\subsubsection{Synthetic Data}
The synthetic data experiment is based on two heteroskedastic time-series models. First, we generate data from a GARCH(1,1) process of the form 
$$
\mathbf{x}_t = \boldsymbol{\sigma}_t\boldsymbol{\eta}_t,\quad\text{where}\quad \boldsymbol{\sigma}_t^2 = 15 + 0.4\mathbf{x}_{t-1}^2 + 0.5 \boldsymbol{\sigma}_{t-1}^2,
$$
Second, we generate data from a GARCH-in-mean(1,1) process:
$$
\mathbf{x}_t = 0.08\boldsymbol{\sigma}_t^2 + \boldsymbol{a}_t,\quad \boldsymbol{a}_t = \boldsymbol{\sigma}_t\boldsymbol{\eta}_t,\quad \boldsymbol{\sigma}^2_t = 0.05  + 0.08 \mathbf{x}_{t-1}^2+ 0.9 \boldsymbol{\sigma}_{t-1}^2.
$$
In both settings, the innovations $\boldsymbol{\eta}_t$ are drawn from a Student's $t$-distribution with 5 degrees of freedom, and the sample size is 7200.

For a GARCH(1,1) model, according to  \cite{tsay2005analysis},  by defining $\boldsymbol{\tau}_t = \mathbf{x}_t^2 - \boldsymbol{\sigma}^2_t,$ Then the dynamics of the squared process can be written as
$$
\mathbf{x}_t ^2 = 15 + 0.9\mathbf{x}_{t-1} ^2 +\boldsymbol{\tau}_t  - 0.5\boldsymbol{\tau}_{t-1}. 
$$
That is, an ARMA process in  $\mathbf{x}_t^2.$ The GARCH(1,1) model is widely used in financial time-series analysis because it captures volatility clustering, a phenomenon that large shocks tend to be followed by large shocks and small shocks by small shocks.

The GARCH-in-mean model further allows the conditional variance to directly affect the conditional mean. 
Specifically, the term $0.08\boldsymbol{\sigma}_t^2$ introduces a volatility-dependent mean component, reflecting the commonly observed risk--return trade-off in financial markets, where higher conditional volatility may be associated with higher expected returns.

\subsubsection{Real Life Data}

The real-life datasets include \textit{Exchange}, \textit{S\&P 500 Industrial}, and \textit{Electricity} for univariate forecasting tasks, and \textit{ETTh1}, \textit{ETTh2}, and \textit{Electricity} for multivariate forecasting tasks. 
A summary of dataset characteristics is provided in Table~\ref{table.data_set_info}. 

The \textit{Exchange} dataset consists of daily exchange rates of eight countries from 1990 to 2016 and is widely used as a benchmark for long-term time-series forecasting. 
The \textit{S\&P 500 Industrial} dataset is constructed from 15 representative stocks in the Industrials sector of the S\&P 500 companies, with Ticker as follows:  
\[
\{\text{BA}, \text{CAT}, \text{MMM}, \text{GE}, \text{RTX}, \text{HON}, \text{ITW}, \text{RSG}, 
\text{ODFL}, \text{URI}, \text{LMT}, \text{NOC}, \text{CMI}, \text{CARR}, \text{WM}\}.
\]
For this dataset, we transform raw daily closing prices into log-returns,
\[
\mathbf{r}_t = \log(\mathbf{x}_t) - \log(\mathbf{x}_{t-1}),
\]
where $\mathbf{x}_t$ denotes the vector of closing prices at trading day $t$. 
This transformation is commonly used in financial time-series analysis as it alleviates the non-stationarity of price levels and improves comparability across assets. 
All stock series are aligned by trading dates.

Both \textit{Exchange} and \textit{S\&P 500 Industrial} are financial datasets and are therefore expected to exhibit typical financial-market characteristics, including heteroskedasticity, volatility clustering, heavy tails, and abrupt regime changes. 
These properties make them challenging for forecasting models, especially under distribution shifts or periods of high market uncertainty.

The \textit{Electricity} dataset contains electricity consumption records from multiple clients. 
It is characterized by strong daily and weekly seasonal patterns, while also containing irregular fluctuations caused by holidays, weather conditions, or changes in consumption behavior. 
For multivariate forecasting, we additionally use the \textit{ETTh1} and \textit{ETTh2} datasets, which contain hourly electricity transformer measurements, including oil temperature and load-related variables. 
These datasets are useful for evaluating whether forecasting models can capture both temporal dependencies and cross-variable interactions in multivariate industrial sensor data.

Together, these datasets cover different types of real-world time series, including financial markets, electricity demand, and industrial sensor measurements. 
They provide complementary benchmarks with different levels of noise, seasonality, non-stationarity, and cross-variable dependence.

\begin{table}[h]
  \centering
  \begin{threeparttable}
  \caption{Overview of the datasets statistics}
  \label{table.data_set_info}
  \begin{tabular}{lllllll}
    \toprule
    Dataset   & Dimension     & Test    & Domain   &Freq. & Median Time Steps    \\
    \midrule
        ETTh1\tnote{a}         & 7 & 126 & $\mathbf{R}^+$ & H   &  17396  \\ 
    ETTh2\tnote{b}         & 7 & 126 & $\mathbf{R}^+$ & H   &  17396  \\ 
    S\&P 500 Industrial\tnote{c} &15 &72 & $\mathbf{R}^+$ &D & 1259\\
    Electricity\tnote{d}  & 370 & 2590 & $\mathbf{R}^+$ & H &  5833  \\

    Exchange\tnote{e}    & 8 & 40 & $\mathbf{R}^+$ &  D & 6071 \\
    \bottomrule
  \end{tabular}
    \begin{tablenotes}
    \item[a] \url{https://github.com/zhouhaoyi/ETDataset/tree/main} 
    \item[b] \url{https://github.com/zhouhaoyi/ETDataset/tree/main}
    \item[c] \url{https://www.kaggle.com/datasets/camnugent/sandp500/data} 
    \item[d] \url{https://archive.ics.uci.edu/dataset/321/electricityloaddiagrams20112014}
    \item[e] \url{https://github.com/laiguokun/multivariate-time-series-data}
    \end{tablenotes}

    \end{threeparttable}
\end{table}

\subsection{Training Details}
\label{section.training_details}
\textbf{Common setup.} 
When computing CRPS, MSIS, and $\mathrm{ACE}_{90}$, we set \texttt{num\_samples} to 100 in \textit{GluonTS}.

\textbf{Univariate time series.} 
For ProbRes, the conditional mean model uses the same backbone architectures as in the main paper, namely DLinear, PatchTST, and TimeMixer. The context length and prediction length follow the settings in \cite{NEURIPS2023_5a1a10c2}, and are demonstrated in Table \ref{table.hyper_parameter}. For the conditional volatility model, we adopt the techniques described in Remarks~\ref{remark.train_G} and~\ref{remark.predict_A}. We instantiate this component with a simple multilayer perceptron, implemented as \texttt{SimpleFeedForwardEstimator} in \textit{GluonTS}~\cite{JMLR:v21:19-820}. The context length is carefully selected based on the autocorrelation coefficients plot provided below (Figure \ref{figure.appendix_acf}), which is 
summarized in Table \ref{table.hyper_parameter}, 
and the prediction length is set to 1. 
For quantile-based methods, we use quantile levels $[0.1, 0.5, 0.9]$. 
For conformalized quantile regression, we fix $\alpha=0.1$ and use a calibration ratio of $0.2$ in all experiments.

\textbf{Multivariate time series:} 
In all experiments, the conditional volatility model employs the same architecture as the conditional mean model.  The context length is empirically determined based on the autocorrelation coefficients plot (Figure \ref{figure.appendix_acf}), with specific values detailed in Table \ref{table.hyper_parameter}.

\textbf{Synthetic Data:} For all synthetic data, the first 90\% of samples serve as training set and the last 10\% samples serve as test set. The burn-in period is chosen to be 30.

\textbf{Computational resources.}
All experiments are run on two servers equipped with NVIDIA A10 GPUs.

\begin{table}[h]
  \caption{Hyperparameters of the Conditional Mean and Volatility model}
  \label{table.hyper_parameter}
  \centering
  \begin{tabular}{llllll}
    \toprule
    & \multicolumn{2}{c}{Conditional Mean Model}  & \multicolumn{2}{c}{Conditional Volatility Model} \\ 
    \cmidrule(r){2-3} \cmidrule(l){4-5}
    Dataset  & Context Len.     & Predict Len.   & Context Len.  & Predict Len.  \\
    \midrule
    ETTh1     & 336 & 24 & 24   &  1\\ 
    ETTh2     & 336 & 24 & 24   &  1\\ 
    S\&P500 Industrial & 72  & 12   &30 &  1 \\
    Electricity   & 336 & 24  &48 & 1 \\
    Exchange   & 360 & 30  &100 &  1\\
    GARCH(1, 1) & 72 & 12  &12 &  1\\
    GARCH-in-mean(1,1)& 72 & 12  &12 &  1\\
    \bottomrule
  \end{tabular}
  
\end{table}

\begin{figure}[h]
    \centering
    \begin{subfigure}[b]{0.32\textwidth}
        \includegraphics[width=\linewidth, height = 0.8\linewidth]{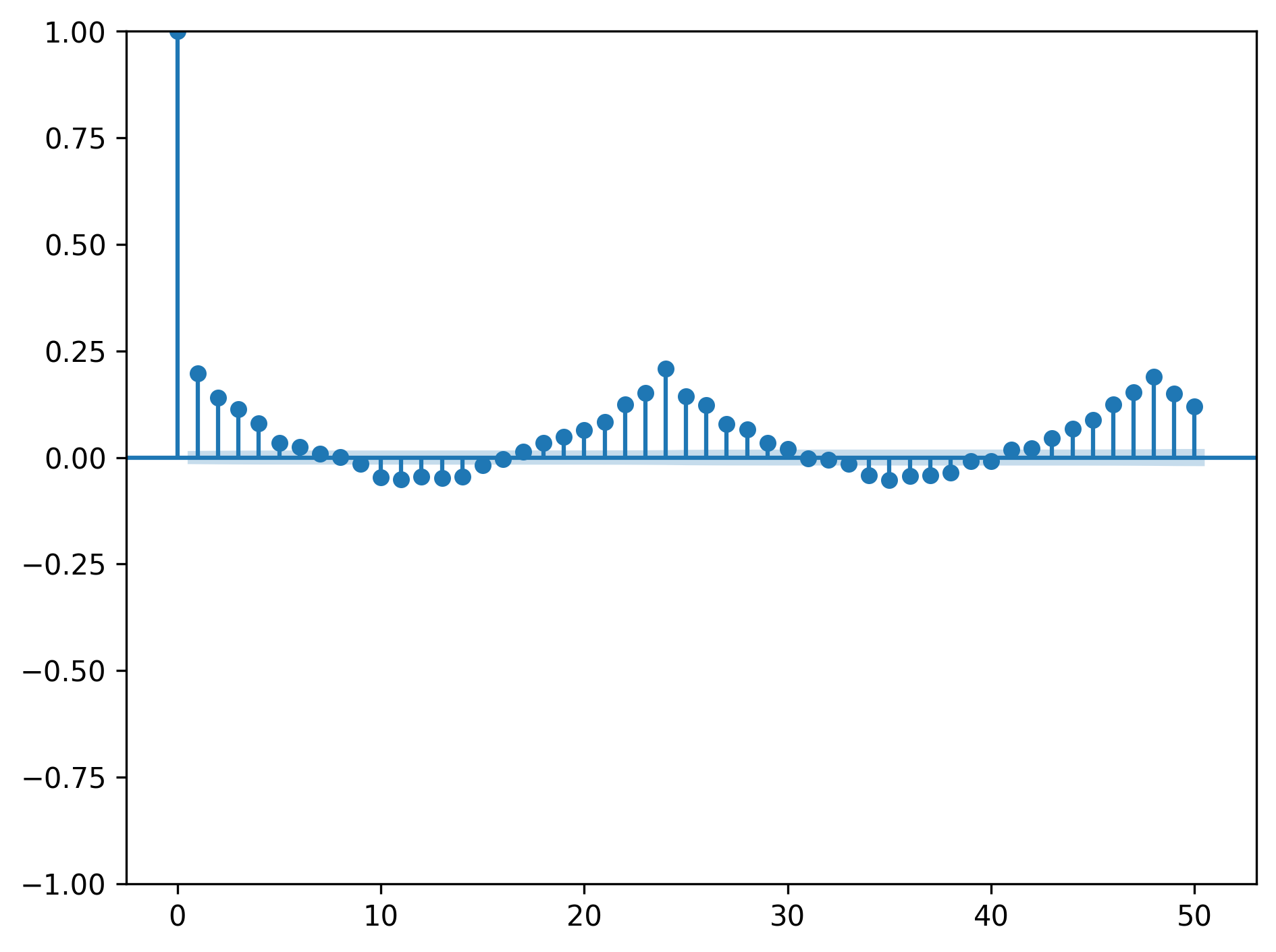}
        \caption{ETTh1}
    \end{subfigure}
    \begin{subfigure}[b]{0.32\textwidth}
        \includegraphics[width=\linewidth, height = 0.8\linewidth]{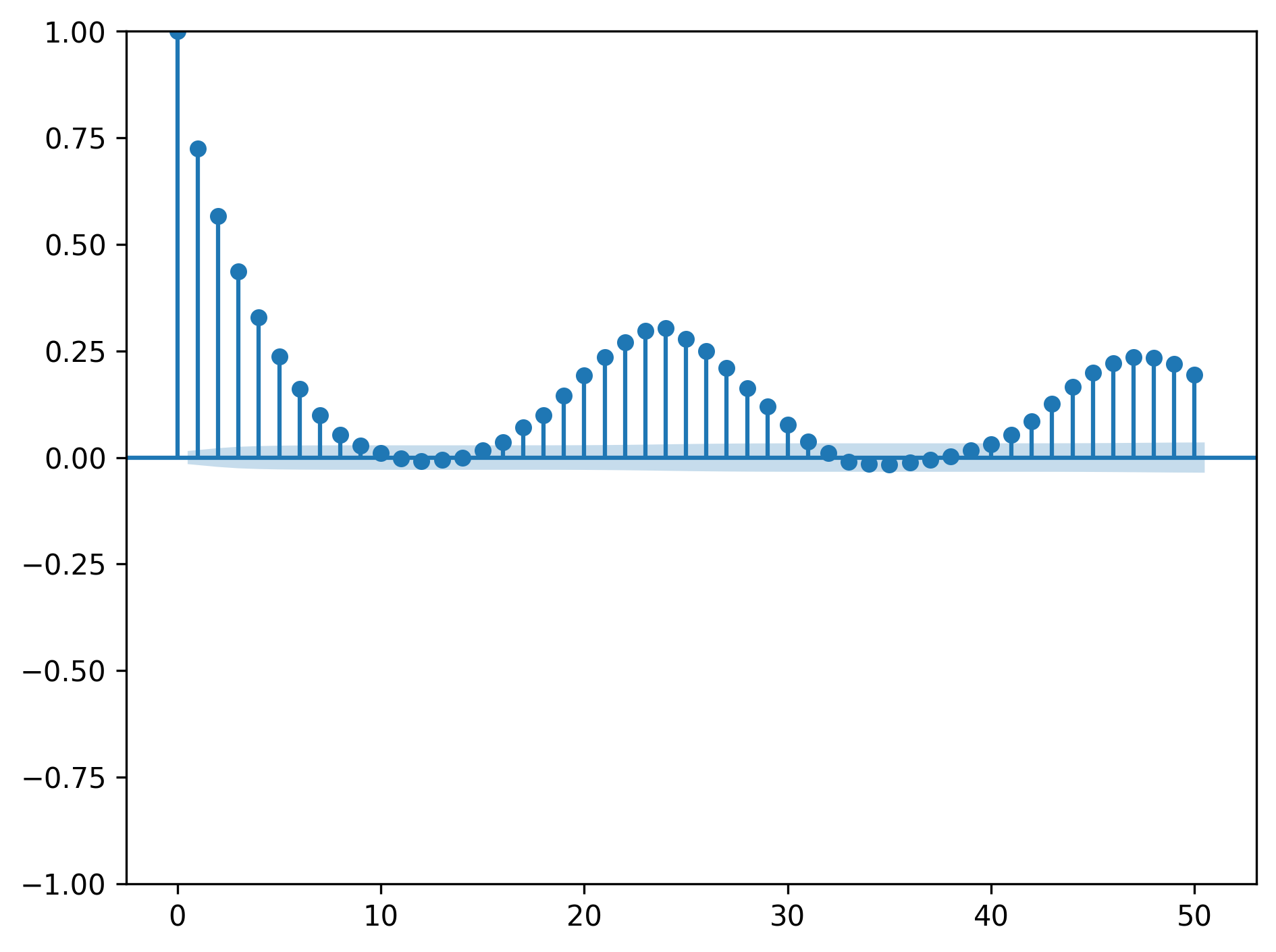}
        \caption{ETTh2}
    \end{subfigure}
    \begin{subfigure}[b]{0.32\textwidth}
        \includegraphics[width=\linewidth, height = 0.8\linewidth]{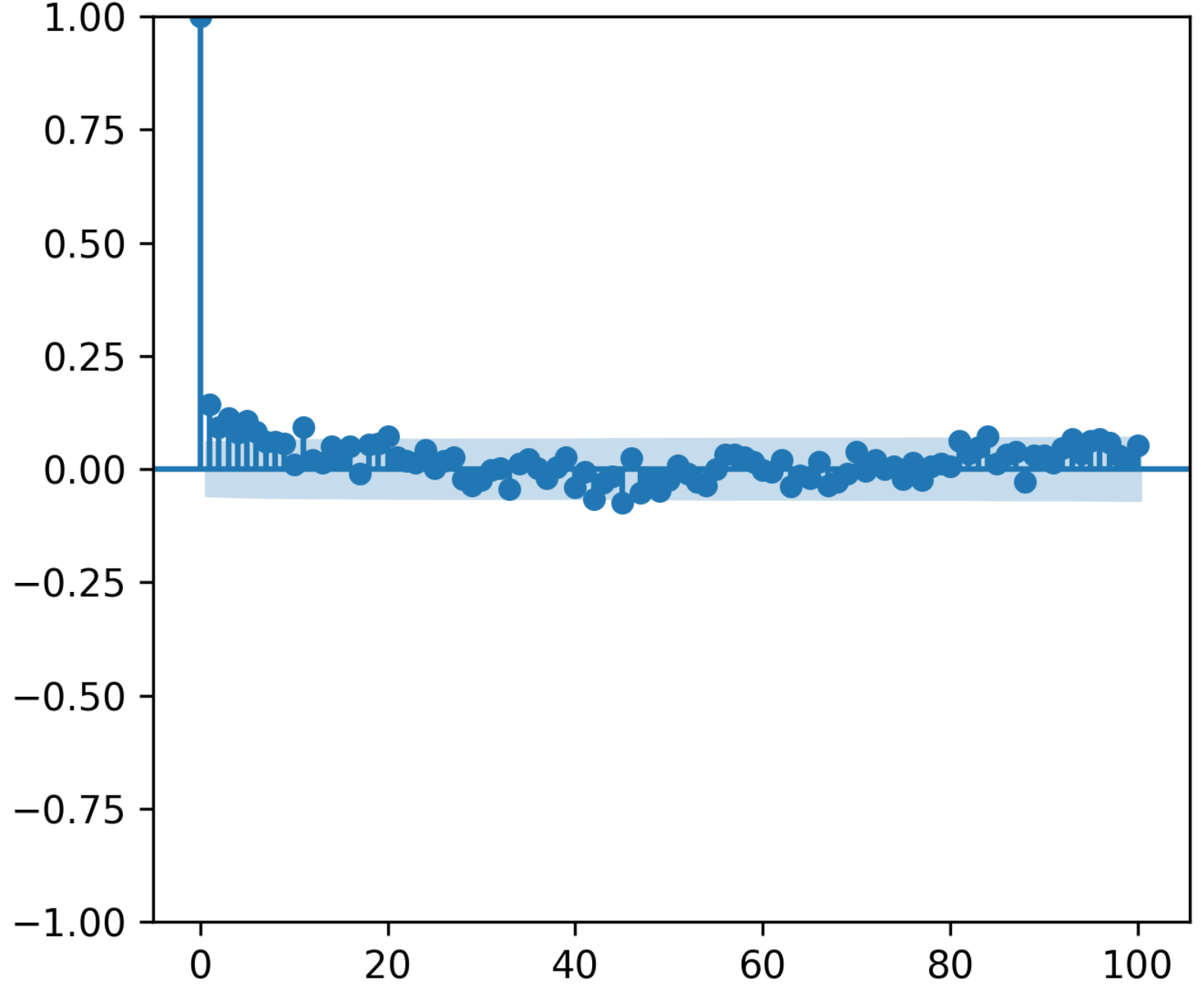}
        \caption{S\&P500 Industrial}
    \end{subfigure}
    \begin{subfigure}[b]{0.32\textwidth}
        \includegraphics[width=\linewidth, height = 0.8\linewidth]{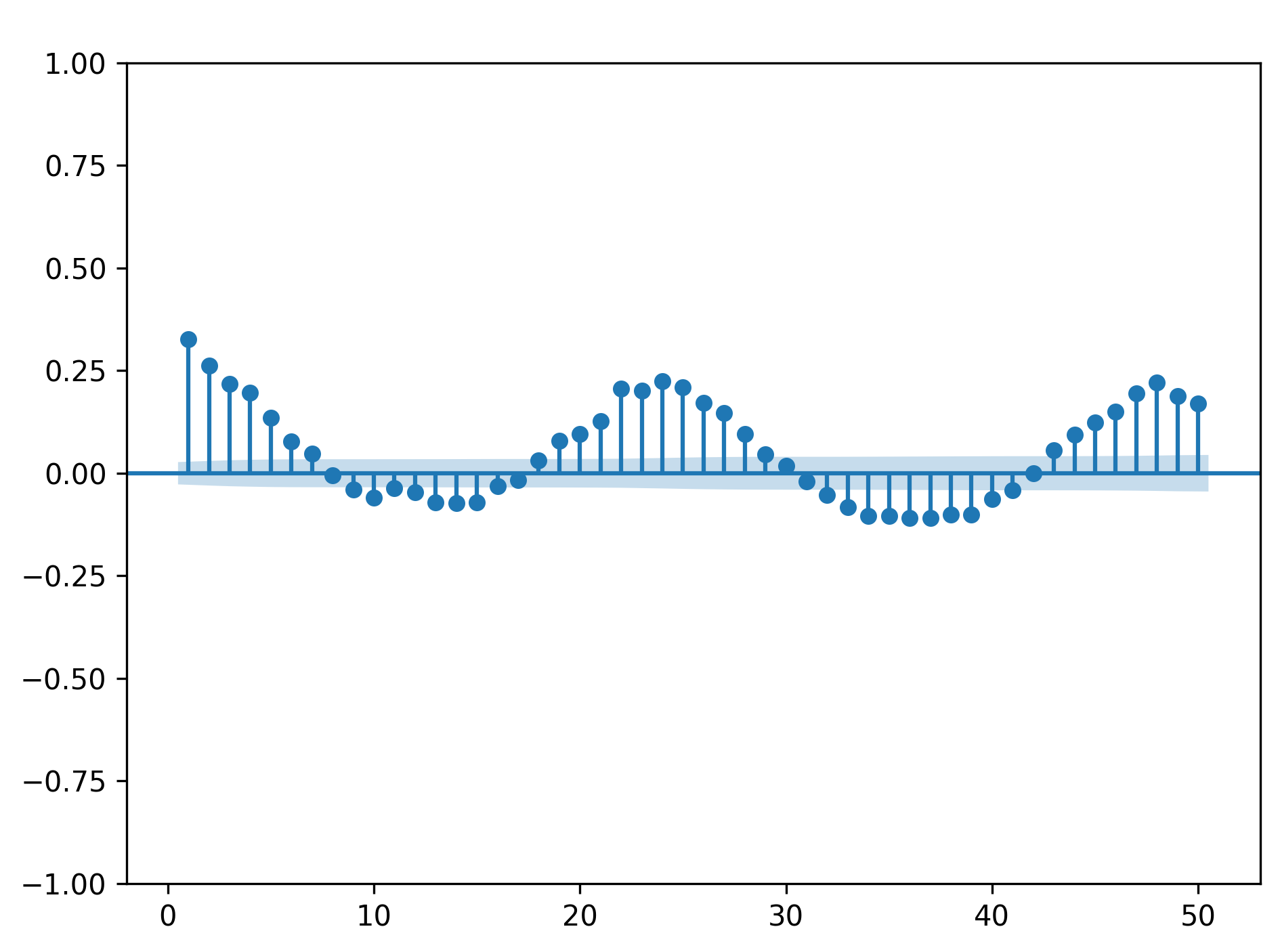}
        \caption{Electricity}
    \end{subfigure}
    \begin{subfigure}[b]{0.32\textwidth}
        \includegraphics[width=\linewidth, height = 0.8\linewidth]{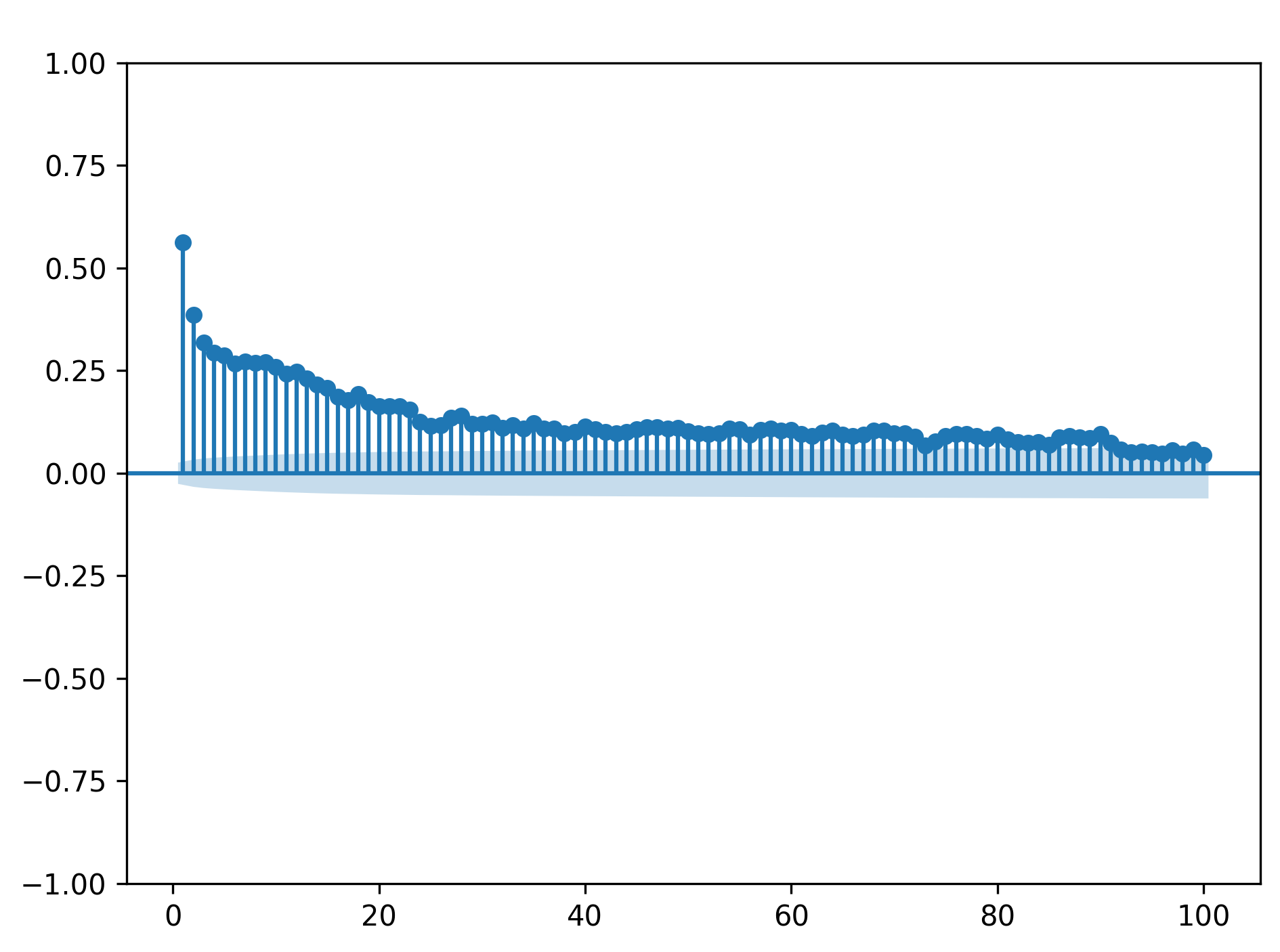}
        \caption{Exchange}
    \end{subfigure}
    \caption{Autocorrelation coefficients plot of the logarithm of square fitted residuals.}
    \label{figure.appendix_acf}
\end{figure}

\subsection{Metrics  of the experiment}
\label{section.metrics}
\textbf{Continuous Ranked Probability Score (CRPS).} The CRPS is a commonly used metric in probabilistic forecasting, as demonstrated in \cite{Gneiting01032007} and \cite{NEURIPS2023_5a1a10c2}. It is defined as the integral of the pinball loss over the interval $[0,1]$:
\begin{align*}
    CRPS(F^{-1}, y) = \int_0^1 2\Lambda_\kappa(F^{-1}(\kappa), y)\mathrm{d}\kappa,\ \text{where } \Lambda_\kappa(q,y) = (\kappa - \mathbf{1}_{y < q})\times (y - q).
\end{align*}
A forecasted quantile function $F^{-1}$ with a small CRPS indicates good alignment with the observation $y.$ We approximate the quantile function by sample quantiles at nine quantile levels $\{10\%, 20\%,\cdots, 90\%\}.$ These sample quantiles are estimated from 100 forecast samples.

For multivariate time series, the CRPS is computed as the summation of the element-wise CRPS.

\textbf{Mean Scaled Interval Score (MSIS). } Originally introduced in the M4 Competition \cite{MAKRIDAKIS2018802}, MSIS  simultaneously assesses the interval width and coverage in probabilistic forecasting. Suppose a prediction interval $[L_t, U_t]$ and the true value $y_t,$ the interval score (IS) at time $t$ is defined as 
$$
IS_t = \left(U_t - L_t\right) + \frac{2}{\alpha}(L_t - y_t)\mathbf{1}_{y_t < L_t} + \frac{2}{\alpha}(y_t - U_t)\mathbf{1}_{y_t > U_t},
$$
where $\alpha$ denotes the significance level. Failure in covering $y_t$ leads to an $2 / \alpha$ penalty in addition to the interval length. MSIS is defined by 
$$
MSIS = \frac{1}{J}\sum_{t = 1}^J \frac{IS_t }{\frac{1}{T - m}\sum_{t = m + 1}^T\vert y_t - y_{t - m}\vert},
$$
where $J$ is the forecast length, $T$ is the training sample size, and $m$ is the seasonality. We choose $m = 30$ and $\alpha = 5\%$ in the experiment.

\textbf{Average coverage error of the prediction interval ($\mathrm{ACE}_{90}$).} The metric $\mathrm{ACE}_{90}$ assesses  the calibration accuracy of prediction intervals.  Define PICP as in \cite{yao2019quality}, of the form 
$$
PICP = \frac{1}{J}\sum_{t = 1}^J \mathbf{1}_{y_t\in [L_t, U_t]}.
$$
ACE is defined as the deviation of PICP from the nominal coverage level  $ACE_{1 - \alpha} = \vert PICP - (1 - \alpha)\vert.$ We consider ACE at 90\% coverage, which selects $\alpha = 10\%.$

\textbf{Energy Score (ES).}  Introduced in \cite{chung2024samplingbased}, ES is a metric to evaluate the performance of a probabilistic forecasting method in capturing spatial dependence for multivariate data. For a future time series data  $\mathbf{y}_j\in\mathbf{R}^d,$ and a predictive distribution $\widehat{p}_j,$ we define the energy score as 
$$
ES_j = \mathbf{E}_{\mathbf{x}\sim\widehat{p}_j}||\mathbf{x} - \mathbf{y}_j||^\beta_2 - \frac{1}{2}\mathbf{E}_{\mathbf{x},\mathbf{x}^\prime\sim\widehat{p}_j}||\mathbf{x} - \mathbf{x}^\prime||^\beta_2,
$$
where $\mathbf{x},\mathbf{x}^\prime$ are independent sampled from $\widehat{p}_j.$ We calculate the ES as the average value 
$$
ES = \frac{1}{J}\sum_{j = 1}^J ES_j.
$$

Following \cite{chung2024samplingbased}, we set $\beta = 1.7.$ A smaller energy score indicates that the predictive distribution is closer to the ground truth.




\subsection{Additional experimental results}
\label{section.additional_result}

Tables~\ref{Table.experiment_result_real_life_data_appendix_1}, \ref{Table.experiment_result_real_life_data_appendix_2}, and \ref{Table.experiment_result_multivariate_appendix} complement Tables~\ref{Table.experiment_result_real_life_data_main} and
\ref{Table.experiment_result_multivariate} by reporting 95\% margins of error.
Overall, the results indicate that ProbRes achieves competitive performance compared to baselines with moderate variability, demonstrating stable behavior across datasets and backbone models.

On real-world datasets, ProbRes consistently improves CRPS and MSIS while maintaining well-calibrated coverage, as reflected by $\mathrm{ACE}_{90}$. In most cases, it either attains the best performance or remains highly competitive with the strongest baselines. Figure~\ref{fig:five_subplots} further illustrates the 95\% prediction intervals produced by ProbRes, which successfully cover the majority of future observations even in the presence of strong conditional heteroskedasticity (e.g., in the \textit{S\&P 500 Industrial} dataset). At the same time, the intervals remain reasonably sharp, highlighting their practical utility for downstream tasks such as asset pricing and risk management.

For multivariate datasets, Table~\ref{Table.experiment_result_multivariate_appendix} shows that incorporating ProbRes leads to consistent improvements in both CRPS$_{\text{sum}}$ and energy score (ES) across different models and datasets. The gains are particularly strong on the \textit{Electricity} dataset, suggesting that ProbRes is  effective in complex, high-variance settings.

Finally, we highlight an important distinction between datasets. The \textit{Electricity} dataset exhibits strong deterministic structure, with the mean component dominating the dynamics. In contrast, financial datasets such as \textit{S\&P 500 Industrial} log-returns are largely driven by stochastic fluctuations with limited mean structure. In such settings, accurately modeling volatility becomes essential for achieving reliable probabilistic forecasts, which further underscores the advantage of ProbRes.

\begin{table}[h]
  \caption{Experimental results on real-life data with 95\% margins of error.}
  \label{Table.experiment_result_real_life_data_appendix_1}
  \scriptsize
  \centering
  \begin{tabular}{l | c c c | c c c}
    \toprule
     Dataset &\multicolumn{3}{c|}{Electricity} &\multicolumn{3}{c|}{Exchange} \\
    \midrule
    Metrics & CRPS & MSIS & $\mathrm{ACE}_{90}$ & CRPS & MSIS & $\mathrm{ACE}_{90}$ \\
    \midrule
    Dlinear + q & 0.065(0.010) &8.380(0.013) &\underline{0.065(0.009)} & \underline{0.015(0.001)} & 38.02(5.854) & \underline{0.053(0.013)}\\ 
    + bootstrap &\underline{0.057(0.001)} & \underline{7.986(3.745)} &0.096(0.001) & 0.017(0.009) & 40.71(12.36) & 0.235(0.160)\\
    + CQR   &0.166(0.008) &17.18(0.703) &0.087(0.001) & \underline{0.015(0.001)} & \underline{30.50(3.051)} & 0.059(0.020)\\
    + ProbRes &\textbf{0.054(0.000)} & \textbf{7.602(3.790)} & \textbf{0.055(0.012)} & \textbf{0.010(0.001)} & \textbf{20.83(4.214)} & \textbf{0.048(0.047)}\\
    \midrule
    PatchTST + q & 0.072(0.007) & 6.758(0.904) & \underline{0.044(0.019)} &  \underline{0.013(0.003)} & \underline{38.65(9.148)} & 0.104(0.068)\\
     + bootstrap &\underline{0.064(0.000)} &\underline{6.599(0.233)} &0.081(0.011) & 0.026(0.018) & 101.5(91.02) & 0.365(0.162)\\
     + Conformal &0.106(0.004) &10.48(0.180) &0.086(0.002) & 0.016(0.003) & 42.03(15.28) & \underline{0.060(0.017)}\\
     + ProbRes &\textbf{0.063(0.001)} & \textbf{5.825(0.652)} & \textbf{0.041(0.015)}   & \textbf{0.012(0.001)} & \textbf{22.73(9.185)} & \textbf{0.035(0.016)}\\
     \midrule
      TimeMixer + q &0.254(0.035) &14.47(0.536) &0.064(0.008) & 0.029(0.002) & 77.00(4.588) & 0.056(0.026)\\
      + bootstrap &0.271(0.002) &\textbf{9.999(0.820)} &0.095(0.015) & \underline{0.016(0.001)} & \underline{63.22(2.602)} & \underline{0.026(0.004)}\\
      + Conformal & 0.258(0.022)&12.67(0.536) &\underline{0.030(0.022)} & 0.029(0.005) & 67.61(2.722) & 0.070(0.022)\\
      + ProbRes &\textbf{0.235(0.003)}&\underline{10.67(0.511)} &\textbf{0.007(0.004)}    & \textbf{0.013(0.001)} & \textbf{32.64(6.815)} & \textbf{0.015(0.021)}\\
    \bottomrule
  \end{tabular}

  \textbf{Remark: } q here refers to the quantile regression 
\end{table}

\begin{table*}[t]
  \caption{Continued Experimental results on real-life data with 95\% margins of error.}
  \label{Table.experiment_result_real_life_data_appendix_2}
  \setlength{\tabcolsep}{3pt}
  \small
  \centering
  
  \begin{tabularx}{\textwidth}{l |X X X }
    \toprule
     Dataset &\multicolumn{3}{c|}{S\&P 500 Industrial} \\
    \midrule
    Metrics & CRPS & MSIS & $\mathrm{ACE}_{90}$ \\
    \midrule
    Dlinear + quantile &\underline{0.817(0.003)} &5.494(0.066) &0.048(0.004)\\ 
    + bootstrap &0.826(0.000) &5.605(0.012) & \textbf{0.012(0.001)}\\
    + CQR      &0.833(0.003) &\underline{5.333(0.060)} &0.031(0.003)\\
    + ProbRes &\textbf{0.816(0.003)} &\textbf{5.222(0.097)} & \underline{0.030(0.008)}\\
    \midrule
    PatchTST + quantile & \underline{0.813(0.002)} & 5.217(0.102) & 0.044(0.016)\\
     + bootstrap & 0.821(0.001) & 5.537(0.022) & \textbf{0.013(0.002)}\\
     + Conformal & 0.821(0.002) & \underline{5.190(0.033)} & 0.032(0.002)\\
     + ProbRes   & \textbf{0.811(0.003)} & \textbf{5.170(0.066)} & \underline{0.024(0.011)}\\
     \midrule
      TimeMixer + quantile & \textbf{0.803(0.001)} & \textbf{4.966(0.029)} & 0.031(0.004)\\
      + bootstrap & 0.822(0.000)   & 5.653(0.002)        & \underline{0.018(0.000)}\\
      + Conformal & 0.813(0.001)   & \underline{5.012(0.014)}        & 0.043(0.002)\\
      + ProbRes & \textbf{0.803(0.002)}     & 5.073(0.087)        & \textbf{0.012(0.007)}\\
    \bottomrule
  \end{tabularx}
\end{table*}

\begin{table*}[h]
  \caption{Numerical experiment results on multivariate time series datasets. The interpretation of the values and the use of boldface are the same as in Table \ref{Table.experiment_result_synthetic_data}. }
  \setlength{\tabcolsep}{2pt}
  \small
  \label{Table.experiment_result_multivariate_appendix}
  \centering
  \begin{tabular}{l|cc|cc|cc}
    \toprule
    \multicolumn{1}{l|}{Dataset} & \multicolumn{2}{c}{ETTh1} & \multicolumn{2}{c}{ETTh2}  &  \multicolumn{2}{c}{Electricity}\\
    \midrule
    Metrics     & CRPS$_{sum}$  & ES   &CRPS$_{sum}$    & ES &CRPS$_{sum}$  & ES \\
    \midrule
   VEC-LSTM & 0.184(0.003) & 3.873(0.157) & 0.095(0.002) & 6.423(0.196) & 0.441(0.014)  & 48684(3323)\\
    +ProbRes   & \textbf{0.182(0.005)}  & \textbf{3.503(0.085)} & \textbf{0.087(0.001)} & \textbf{6.067(0.190)} & \textbf{0.301}(0.013) &  \textbf{41398}(3744)   \\
    \midrule
    TMDM  & 0.456(0.023)  & 13.344(0.163) & 0.092(0.008) & \textbf{6.933(0.393)} & 0.655(0.275)
      & 87761(6179)
    \\
    +ProbRes & \textbf{0.397(0.040)}  & \textbf{11.341(0.372)} & \textbf{0.092(0.004)}  & 7.326(0.498) & \textbf{0.292(0.018)}  & \textbf{37322(2438)}\\
    \bottomrule
  \end{tabular}

\end{table*}

\begin{figure}[t]
\centering

\begin{subfigure}{0.48\textwidth}
    \centering
    \includegraphics[width=\linewidth]{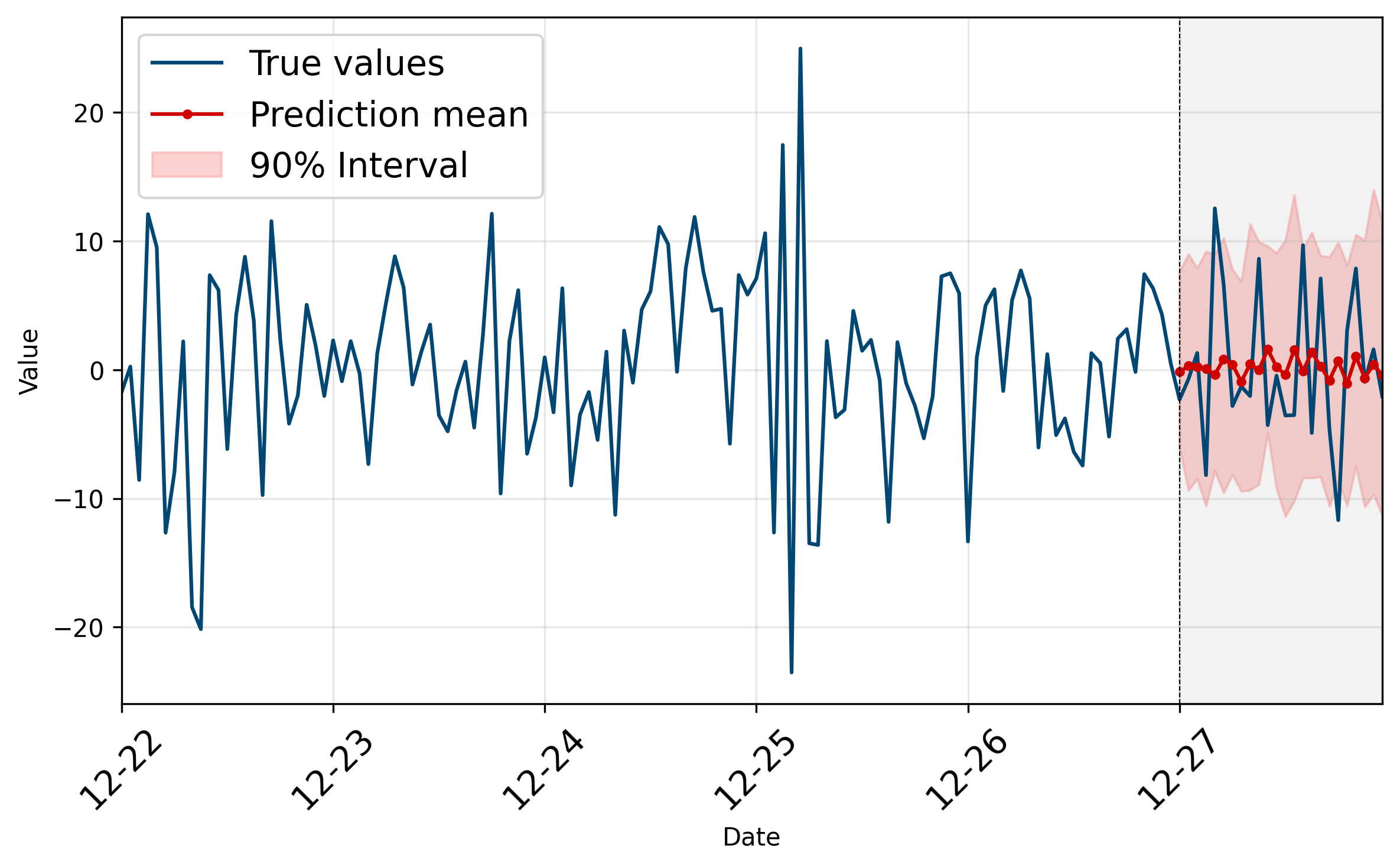}
    \caption{GARCH(1,1)}
\end{subfigure}
\hfill
\begin{subfigure}{0.48\textwidth}
    \centering
    \includegraphics[width=\linewidth]{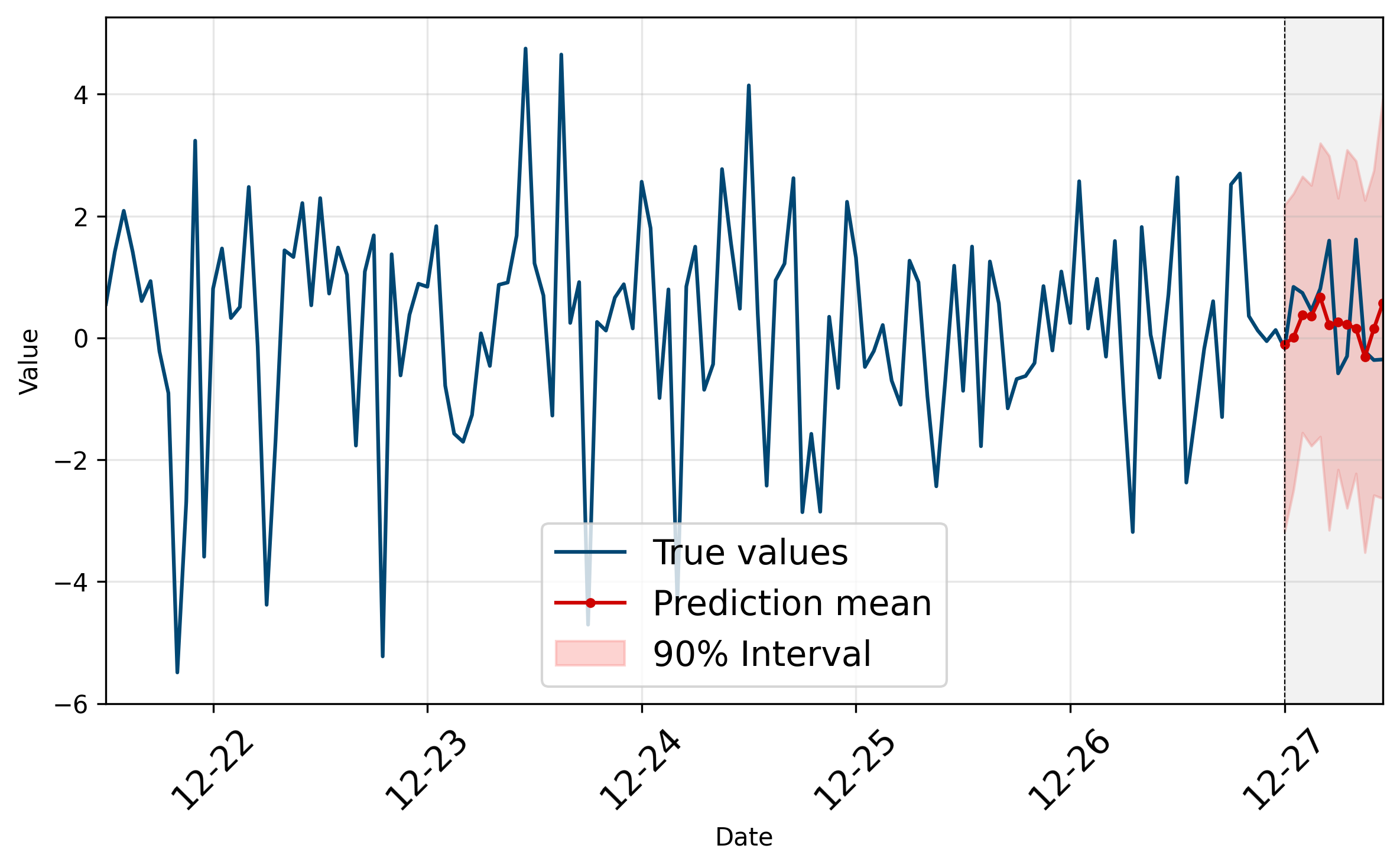}
    \caption{GARCH-in-mean(1,1)}
\end{subfigure}
\vspace{5pt}
\begin{subfigure}{0.48\textwidth}
    \centering
    \includegraphics[width=\linewidth]{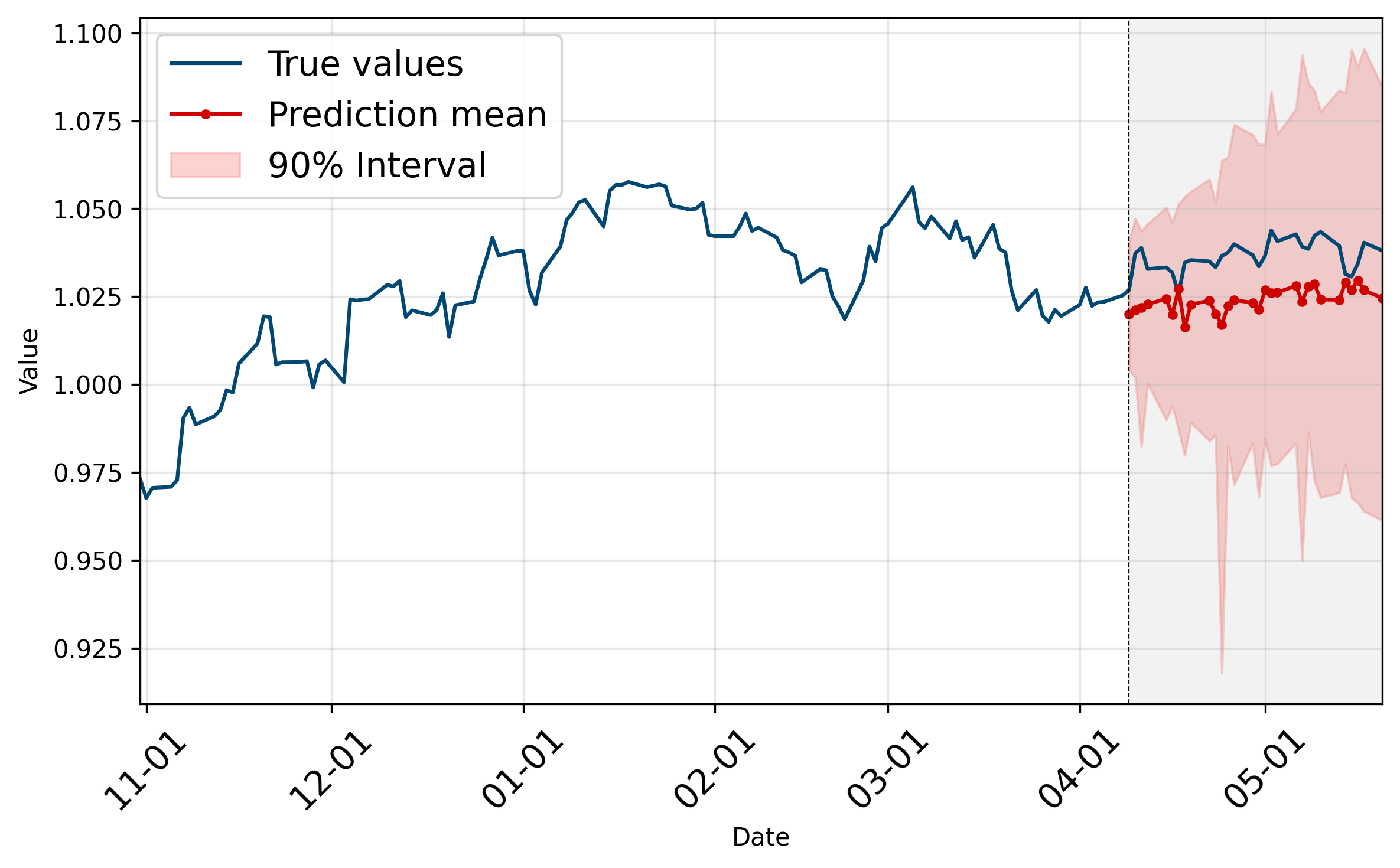}
    \caption{Exchange}
\end{subfigure}
\hfill
\begin{subfigure}{0.48\textwidth}
    \centering
    \includegraphics[width=\linewidth]{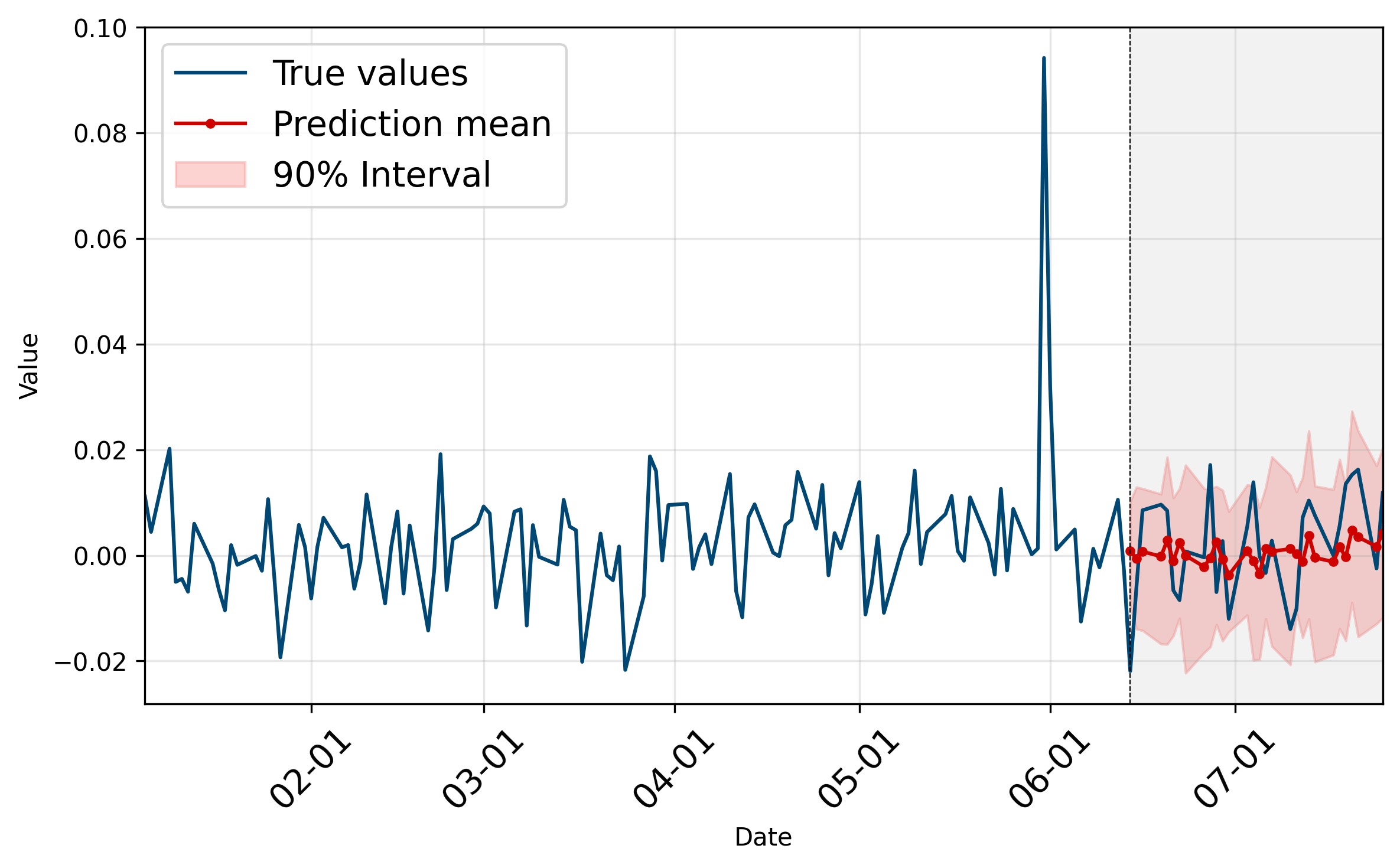}
    \caption{S\&P 500 Industrial}
\end{subfigure}
\vspace{5pt}
\begin{subfigure}{0.48\textwidth}
    \centering
    \includegraphics[width=\linewidth]{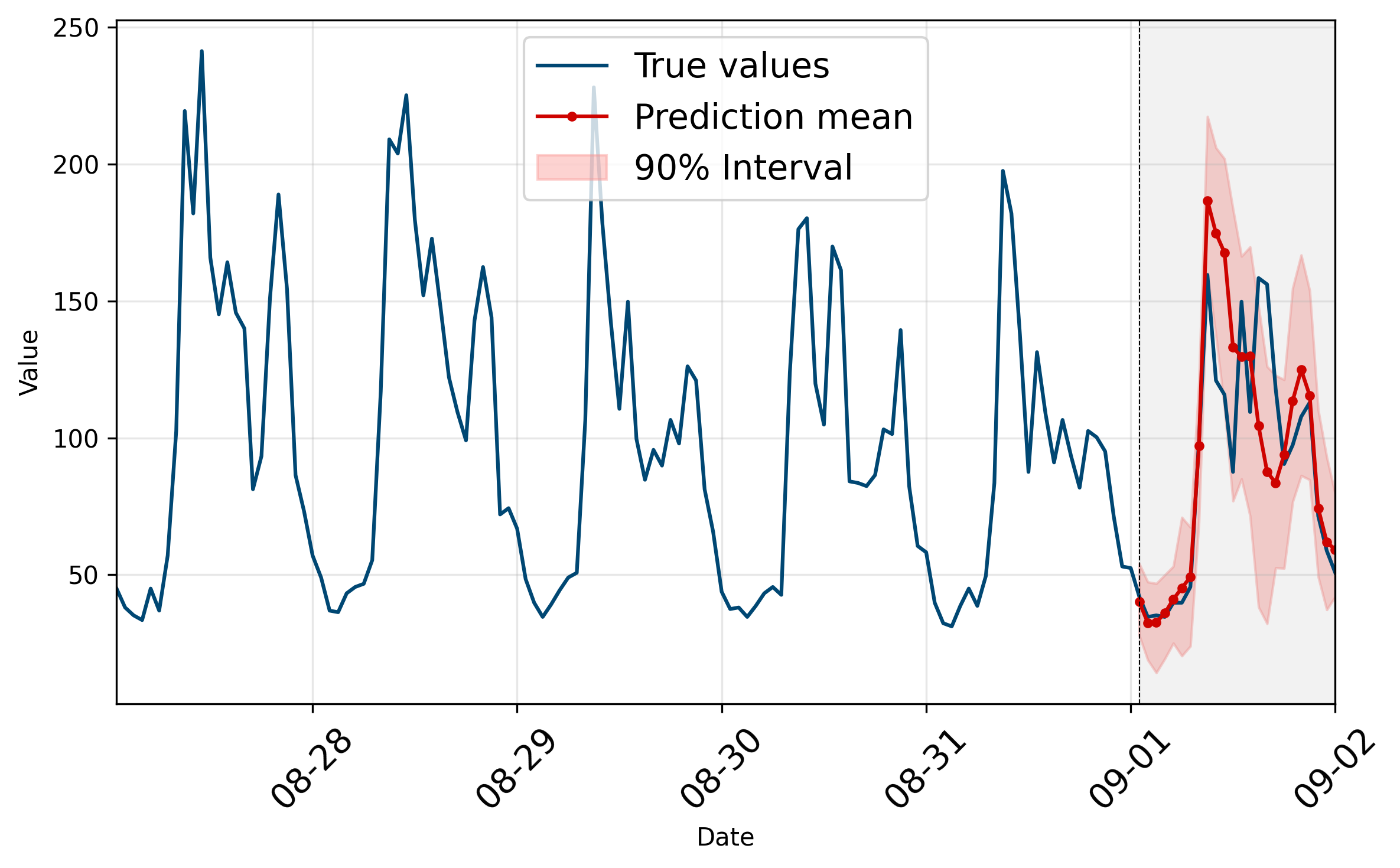}
    \caption{Electricity}
\end{subfigure}

\caption{90\% Prediction intervals generated by ProbRes with Dlinear backbone architecture.}
\label{fig:five_subplots}
\end{figure}

\textbf{Ablation study under distributional shifts:}   
This part evaluates the robustness of ProbRes to distributional shifts in future observations, focusing on both mean and scale perturbations. Specifically, we consider two types of shifts: 
\begin{itemize}
    \item A mean shift, implemented by adding a constant equal to 1\% of the dataset standard deviation to future observations.
    \item A scale shift, implemented by adding Gaussian noise with zero mean and standard deviation equal to 1\% of the dataset standard deviation to future observations.
\end{itemize}
These settings simulate mismatches in location and volatility between training and test distributions. The chosen magnitude (1\% of the standard deviation) corresponds to a moderate perturbation; for example, in the Electricity dataset (standard deviation $\approx$ 4000), this results in a shift of about 40 units.

The results in Table~\ref{Table.experiment_result_test_distribution_shift} show that ProbRes exhibits modest increases in CRPS under both types of shifts, indicating mild performance deterioration. This deterioration is generally stronger under mean shifts than under scale shifts. This suggests that ProbRes is relatively robust to volatility changes, while larger shifts in the mean of future observations have a stronger impact on predictive performance.

\begin{table}
\vspace{-5pt}
\centering
\caption{Changes in CRPS of ProbRes under different distributional shift scenarios. }
  \small
\begin{tabular}{l l|ccccc}
    \toprule
    Dataset & Backbone Model& None  & $\Delta$ Mean    & $\Delta$ Scale \\
    \midrule
    \multirow{3}{*}{Exchange } &   DLinear   & 0.009  & +0.0016 &  +0.0015\\
                               &    PatchTST & 0.012  & +0.0031 &  +0.0013\\
                               &    TimeMixer &0.014  & +0.0023 &  +0.0012\\
    \midrule
    \multirow{3}{*}{S\&P500 Industrial} &   DLinear & 0.814 & + 0.0011 & +0.0001\\
                               &    PatchTST        & 0.810 & + 0.0011 & $+6.3053\times 10^{-5}$\\
                               &    TimeMixer       & 0.809 & + 0.0010 & $+9.0884\times 10^{-6}$\\
    \midrule
    \multirow{3}{*}{Electricity} &   DLinear &  0.054 &  + 0.0288 & + 0.0266\\
                               &    PatchTST &  0.063 &  + 0.0287 & + 0.0250\\
                               &    TimeMixer & 0.235 &  + 0.0084 & + 0.0171\\
    \bottomrule
  \end{tabular}
 \label{Table.experiment_result_test_distribution_shift}
\vspace{-5pt}
\end{table}


\clearpage
\bibliographystyle{unsrt}
\bibliography{Arxiv_submission}

\end{document}